\newcommand{\xmark}{\ding{55}}
\newtheorem{thm}{Theorem}
\newtheorem{lem}{Lemma}
\newcommand{\argmax}{\operatornamewithlimits{argmax}}
\newtheorem{remark}{Remark}
\begin{document}

%

%

\twocolumn[

\aistatstitle{Robust Stochastic Linear Contextual Bandits Under Adversarial Attacks}

\aistatsauthor{ Qin Ding \And Cho-Jui Hsieh \And  James Sharpnack}

\aistatsaddress{  Department of Statistics \\ University of California, Davis \\ qding@ucdavis.edu 
\And  Department of Computer Science \\ University of California, Los Angeles \\ chohsieh@cs.ucla.edu 
\And Amazon \footnote{} \\ Berkeley, CA \\ jsharpna@gmail.com} 
]

\begin{abstract}
  Stochastic linear contextual bandit algorithms have substantial applications in practice, such as recommender systems, online advertising, clinical trials, etc. Recent works show that optimal bandit algorithms are vulnerable to adversarial attacks and can fail completely in the presence of attacks. Existing robust bandit algorithms only work for the non-contextual setting under the attack of rewards and cannot improve the robustness in the general and popular contextual bandit environment. In addition, none of the existing methods can defend against attacked context.  In this work, we provide the first robust bandit algorithm for stochastic linear contextual bandit setting under a fully adaptive and omniscient attack with sub-linear regret. Our algorithm not only works under the attack of rewards, but also under attacked context. Moreover, it does not need any information about the attack budget or the particular form of the attack. We provide theoretical guarantees for our proposed algorithm and show by experiments that our proposed algorithm improves the robustness against various kinds of popular attacks.
\end{abstract}

\section{INTRODUCTION}\label{intro_robust}

A stochastic linear contextual bandit problem models a repeated game between a player and an environment. At each round of the game, the player is given the information of $K$ arms, usually represented by $d$-dimensional feature vectors, and the player needs to make a decision by pulling an arm based on past observations. Only the reward associated with the pulled arm is revealed to the player and the relationship between the rewards and features follows a linear model. The goal of the player is to maximize the cumulative reward or minimize the cumulative regret over $T$ rounds. Over the past few decades, bandit algorithms have enjoyed popularity in substantial real-world applications, such as recommender system \citep{li2010contextual}, online advertising \citep{schwartz2017customer} and clinical trials \citep{woodroofe1979one}.


\footnotetext{Work done prior to joining Amazon}

Classic stochastic linear contextual bandit algorithms such as Linear Upper Confidence Bound (LinUCB) \citep{li2010contextual} and Linear Thompson Sampling (LinTS) \citep{agrawal2013thompson} algorithms can achieve an optimal regret upper bound $\tilde O(d\sqrt{T})$, where $\tilde O$ hides any poly-logarithm factors. Both LinUCB and LinTS have regret upper bounds that match the lower bound $O(d\sqrt{T})$ up to logarithm factors in infinite-arm problems. 
In recent years, adversarial attacks have been extensively studied in order to understand the robustness of machine learning algorithms, including bandit algorithms. Attacks were designed for both the contextual bandit problem and the multi-armed bandit (MAB) problem. The types of attacks include both the attacks on rewards and context. 
For example, in MAB setting, \cite{jun2018adversarial} proposes an oracle attack to modify the rewards of the pulled arms to be $\epsilon$ worse than the target arm. In contextual bandit setting, \cite{garcelon2020adversarial} proposes to attack rewards by converting the reward of pulled arms into random noise. For attack of context, \cite{garcelon2020adversarial} proposes to dilate the features of the pulled arms if they are not among the target arms of the attacker. Due to the optimality of LinUCB and LinTS algorithms, they will stop pulling the sub-optimal arms eventually. However, under adversarial attacks, the real optimal arm appears sub-optimal to the algorithm, which makes classic algorithms fail completely under attacks \citep{garcelon2020adversarial,jun2018adversarial,liu2019data,ma2018data}.

Demonstrated by the aforementioned works, adversarial attacks have become major concerns before applying bandit algorithms in practice. For example, in recommender systems, adversarial attacks will trick the algorithm into imperfect recommendations and lead to disappointing user experiences; In online advertising, attacks can be made by triggering some ads and not clicking on these ads, which fools the system to think these ads have low click-through-rate and therefore benefit its competitors \citep{lykouris2018stochastic}. Thus, it is urgent to design a robust bandit algorithm that works well under attacks.


Lots of efforts have been made to design robust bandit algorithms under the attack of rewards with a fixed attack budget $C$. However, all of them are proposed only for the non-contextual setting. 
\cite{gupta2019better,lykouris2018stochastic} established nearly optimal algorithms for the corrupted multi-armed bandit (MAB) problem. 
Subsequently, \cite{bogunovic2020corruption} generalizes the idea of \cite{lykouris2018stochastic} to the corruption-tolerant Gaussian process bandit problems. Recently, a robust phase elimination (PE) algorithm \citep{bogunovic2020stochastic} is derived for a fixed finite-arm stochastic linear bandit problem under attacks. However, PE algorithm needs to pull a fixed feature many times in each phase, 
implying that it does not work for (or at least non-trivial to be extended to) the contextual bandit setting. As suggested by \cite{bogunovic2020stochastic}, the introduction of context significantly complicates the problem and only under a ``diversity context'' assumption \citep{bogunovic2020stochastic}, a Greedy algorithm can improve the robustness. Adversarial corruptions in linear contextual bandits was considered before by \cite{kapoor2019corruption}, however, \cite{kapoor2019corruption} can only achieve $O(CT)$ regret which is trivial in bandit environment. 
To the best of our knowledge, there is no existing work that can achieve sub-linear regret and improve the robustness of contextual bandit algorithms under attacks of rewards in the general environment. 
Finally, we find that all the previous works only consider the case of corrupted rewards and cannot deal with fully adaptive attacked context, which is also a common type of attack in contextual bandit  \citep{garcelon2020adversarial}.


In this work, we consider the stochastic linear contextual bandit algorithm under unknown adversarial attacks with a fixed  attack budget, where the feature vectors of arms are contextual and can change over time. In addition, the attackers can be omniscient and fully adaptive, in the sense that the attacker can adjust the attack strategy after observing the player's decisions at each round. We first show that when the total attack budget $C$ is known, a simple modification of the classic LinUCB algorithm with an enlarged exploration parameter can obtain a regret upper bound $\tilde O(d(C+1)\sqrt{T})$. In contrast, the traditional LinUCB algorithm suffers from linear regret when the attack budget $C\sim O(\log T)$ \cite{garcelon2020adversarial}.  
When the total attack budget is unknown, we should learn $C$ online. While learning hyper-parameters online is difficult in bandit setting, the bandit-over-bandit (BOB) algorithm \citep{cheung2019learning} in the non-stationary bandit environment successfully adjusts its sliding-window size online to adapt to changing models. Motivated by the BOB algorithm, we design a similar two-layer bandit algorithm called RobustBandit to deal with unknown attack budget. The high-level idea is to use the top layer to select an appropriate candidate attack budget in the adversarial environment and follow the classic LinUCB policy with increased exploration parameter based on the selected budget in the bottom layer. We show that with the two-layer design, our proposed RobustBandit algorithm can achieve $\tilde O(d^{\frac{1}{2}} (C+1) T^{\frac{3}{4}})  + \tilde O( d^{\frac{3}{2}} C T^{\frac{1}{4}} )$ regret upper bound under unknown adversarial attacks. We also show in experiments that a simple modification of our RobustBandit algorithm achieves substantial improvements in robustness. Our algorithm works no matter the attack is for rewards or context and it is simple to implement. We summarize our contributions below:
\begin{enumerate}
    \item We propose the RobustBandit algorithm, which is the first algorithm that can achieve sub-linear regret and improve the robustness of linear contextual bandit problems with no additional assumptions, such as the ``diversity context'' assumption, weak adversary assumption, etc. 
    \item Our proposed algorithm works for infinite-arm problems under the attack of rewards. It is also the first work that improves the robustness under the fully adaptive attack of context.
    \item 
    Our algorithm does not need to know 
    the algorithm used for attack or the attack budget. Furthermore, the method does not need to know whether the attack is for rewards or context under the finite-arm problem. 
\end{enumerate}

\textbf{Notations:} We use $\theta$ to denote the true model parameter. 
For a vector $x \in \mathbb{R}^d$, we use $\|x\|$ to denote its $l_2$ norm and $\|x\|_A = \sqrt{x^T A x}$ to denote its weighted $l_2$ norm associated with a positive-definite matrix $A \in \mathbb{R}^{d\times d}$. Denote $[n] := \{1,2,\dots, n\}$ and $\lceil b \rceil$ as the minimum integer such that $\lceil b\rceil \geq b$.

\section{RELATED WORK}\label{rw_robust}
In this section, we will focus on discussing previous works on improving the robustness of bandit algorithms. Note that very few works consider the attacks of context. \cite{yang2021robust} considers imperfect context and the attack of context cannot be adjusted by the attacker adaptively at each round based on previous observations, which makes the problem much easier. In the following, all the previous works discussed only consider the attacks of rewards. 

In case of multi-armed bandit (MAB) problems under adversarial corruptions, \cite{lykouris2018stochastic} designs a multi-layer active arm elimination race algorithm (MLAER) when the total attack budget is unknown. It obtains multiplicative regret upper bound $O(\frac{KC + \log T}{\Delta} \log (KT))$, where $\Delta$ is the mean reward gap between the optimal and sub-optimal arms.  \cite{gupta2019better} improves the robustness of MAB problems by a novel restarting algorithm, BARBAR, and derives a near-optimal regret upper bound $O(KC+\frac{(\log T)^2}{\Delta} )$, where the regret depends on $C$ linearly. The key is to ensure the algorithm never permanently eliminates an arm and so the seemingly-not-so-good arms always get a small amount of resource.
Both MLAER and BARBAR only work for a weaker adversary that cannot adjust the attack strategy based on the player's decisions. This limits their applications when the attacks are fully adaptive, which is a popular assumption when designing bandit attacks \citep{garcelon2020adversarial,jun2018adversarial,liu2019data,ma2018data}.

In the linear bandit setting with non-contextual features, \cite{bogunovic2020corruption} considers the Gaussian process bandit problems with a function of bounded RKHS norm, and extends the idea in MLAER to Fast-Slow GP-UCB algorithm with a $\tilde O(d(C+1)\sqrt{T})$ regret upper bound.
Note that \cite{bogunovic2020corruption} assumes that the attacks cannot be fully adaptive, which makes it easier to defend against the corruptions. Built upon \cite{gupta2019better}, \cite{li2019stochastic} proposes support bias exploration algorithm (SBE) for stochastic linear bandits and derives a $O(\frac{d^{\frac{5}{2}} C\log T}{\Delta}+ \frac{d^6(\log T)^2}{\Delta^2} )$ regret upper bound. 
Most recently, \cite{bogunovic2020stochastic} proposes a phase elimination algorithm (PE) to improve the robustness 
of linear bandit and obtains $\tilde O(\sqrt{dT} + C^2 + Cd^\frac{3}{2} \log T)$ regret upper bound. In each phase, the PE algorithm computes a design over the set of potential optimal arms and plays each arm in this set with number of times proportional to the computed design. In the meantime, it requires that every arm in its support is played at least some minimal number of times. Since PE needs to play an arm with a fixed feature many times, it does not work in the contextual bandit environment, where the features of arms are contextual and can change over time, which is a more general and popular problem in practice.

For the contextual bandit problem under attacks of rewards,
\cite{bogunovic2020stochastic} shows that under a ``diversity context'' assumption \citep{bogunovic2020stochastic,ding2021efficient,wu2020stochastic}, 
a simple Greedy algorithm can obtain $\tilde O\left (\frac{1}{\lambda_0} (\sqrt{dT} + C\log T) \right)$ regret upper bound in the presence of attacks. Here, $\lambda_0$ is the degree of the diversity, which is defined as the minimum eigenvalue of the covariance matrix of the feature vectors lying in any half space. However, due to the partial feedback setting, contextual feature vectors of the pulled arms are highly biased and so the ``diversity context'' assumption rarely holds in bandit environment. When ``diversity context'' assumption does not hold, $\lambda_0 =0$ and this makes the regret bound of the Greedy algorithm meaningless. Moreover, under similar diversity assumptions \citep{wu2020stochastic}, LinUCB was shown to obtain a constant regret, which makes the optimality of the Greedy algorithm under attacks questionable. Concurrently, \cite{zhao2021linear} studies the linear contextual bandit problem under corruption on rewards and proposes Multi-level weighted OFUL algorithm which achieves $\tilde O( C^2 d \sqrt{\sum_{t=1}^T \sigma_t^2} + C^2\sqrt{dT})$ regrets, where $\sigma_t$ is the variance of observed reward at round $t$. 

Finally, we emphasize that there is no existing work that considers contextual bandit problems with fully adaptive attacked context, even under the ``diversity context'' assumption. We summarize the properties of different algorithms in Table \ref{compare_algo}. 

\begin{table*}[h]
  \caption{Comparison of different algorithms with unknown total attack budget. Column ``Fully adaptive'' means whether the algorithm works when the attacker is fully adaptive and can adapt his attack strategy based on the player's decisions at the current round.}
  \label{compare_algo}
  \centering
  \begin{tabular}{ccccc}
    \toprule
    Algorithm     & Contextual     & Infinite-arm    & Fully adaptive  & Attacked Context   \\ 
    \midrule
    MLAER \citep{lykouris2018stochastic} \textbf{(MAB only)}  & \xmark  & \xmark & \xmark  & \xmark \\ 
    BARBAR \citep{gupta2019better} \textbf{(MAB only)}   & \xmark   & \xmark & \xmark  & \xmark  \\ 
    SBE \citep{li2019stochastic}     & \xmark  & \checkmark & \xmark & \xmark  \\ 
    Fast-Slow GP-UCB \citep{bogunovic2020corruption}     & \xmark  & \checkmark & \xmark & \xmark  \\ 
    PE \citep{bogunovic2020stochastic}     & \xmark  & \xmark & \checkmark  & \xmark \\ 
    Greedy \footnote{} \citep{bogunovic2020stochastic} & \checkmark  & \checkmark & \checkmark  & \xmark \\ 
    \textbf{RobustBandit (This work)}     & \checkmark  & \checkmark & \checkmark  & \checkmark\\ 
    \bottomrule
  \end{tabular}
\end{table*}

\section{PROBLEM SETTING}\label{ps_robust}
We study a $K$-armed stochastic linear contextual bandit setting under corrupted observations, where $K$ can be infinite. Two types of adversarial attacks listed below are considered in this paper. 

\textbf{Attack rewards.} 
At each round $t\in [T]$, the player is given a set of context including $K$ feature vectors $\mathcal{A}_t = \{x_{t,a} | a\in [K]\} \subset \mathbb{R}^d$, where $x_{t,a}$ represents the true information of arm $a$ at round $t$. Based on previous observations and $\mathcal{A}_t$, the player pulls an arm $a_t$ at round $t$. For ease of notation, we denote $X_t:= x_{t,a_t}$. The attacker observes $a_t$ and assigns the attack $c_t$, where $c_t$ may depend on $a_t$ and other possible information. The player receives an attacked reward $\tilde Y_t = Y_t + c_t$, where
$Y_t = X_t^T \theta + \epsilon_t$ is the true stochastic reward. 
Here, $\theta\in \mathbb{R}^d$ is the unknown true model parameter and $\epsilon_t$ is a random noise. 

\textbf{Attack context.}
At each round $t$ before the player observes $\mathcal{A}_t$, the attacker calculates an attack based on past information and the true context $\mathcal{A}_t$ at the current round. The attacker then modifies the true context into the attacked context $\tilde x_{t,a}$. The attack cost paid by the attacker at this round is defined as $c_t = \sum_{a=1}^K \|\tilde x_{t,a} - x_{t,a}\|$. The player can only observe the contextual features after the attack, i.e., $\mathcal{\tilde A}_t := \{ \tilde x_{t,a} | a \in [K] \}$ and pull arm $a_t$ according to the attacked context. The observed reward for this pulled arm is the true reward $Y_t = X_t^T\theta + \epsilon_t$.

In the following, we will use $\mathcal{\tilde A}_t$, $\tilde X_t$ and $\tilde Y_t$ to denote the player's observed features set, observed features and rewards of the pulled arm at round $t$ in the attacked environment respectively. Under the attack of rewards, $\tilde X_t = X_t$ and $\mathcal{\tilde A}_t = \mathcal{A}_t$. While under the attack of context, $\tilde Y_t = Y_t$. 
We use $\mathcal{F}_t = \sigma (a_1,\dots,a_t, \mathcal{\tilde A}_1, \dots, \mathcal{\tilde A}_t, \tilde Y_1,\dots, \tilde Y_t, c_1,\dots,c_t)$ to denote the $\sigma$-algebra generated by all the information up to round $t$.
In both the above two attacks, the random noises $\epsilon_t$ are independent zero-mean $\sigma$-sub-Gaussian random variables with $\mathbb E\left[ e^{b\epsilon_t} | \mathcal{F}_{t-1} \right] \leq e^{\frac{b^2 \sigma^2}{2}}$ for all $t$ and $b \in \mathbb R$. 
Without loss of generality, we assume that $\|x_{t,a}\| \leq 1$ for all $t\in [T]$ and $a\in [K]$, $\|\theta\| \leq 1$ and the true observed reward $Y_t \in [0,1]$. We do not make any assumption on the attacker. The attacker can be fully adaptive in our model and even have complete knowledge of the bandit algorithm the player is using and the bandit environment, i.e., $\theta, \sigma, \mathcal{A}_t$, random noises $\epsilon_t$, the pulled arm $a_t$ and so on. However, to make the attacker invisible to the player, the attack strategy should obey the rules to make sure $\tilde Y_t \in [0,1]$ and $\|\tilde x_{t,a}\|\leq 1$. In addition, the attacker has an attack budget $C$ which limits the total amount of perturbations by $\sum_{t=1}^T |c_t| \leq C$. Denote $a_t^* = \argmax_{a\in [K]} x_{t,a}^T \theta$ as the optimal arm at round $t$ and $x_{t,*} = x_{t,a_t^*}$ the corresponding true feature vector. The goal of the player in both cases is to minimize the cumulative regret defined below
\begin{equation*}
    R(T) = \sum_{t=1}^T (x_{t,*} - X_t)^T \theta.
\end{equation*}

\footnotetext{The results only hold for Greedy algorithm when ``diversity context'' assumption \citep{bogunovic2020stochastic} is satisfied.}

\section{STOCHASTIC LINEAR CONTEXTUAL BANDIT UNDER KNOWN ATTACK BUDGET}
In this section, we first show that if the player knows the attack budget $C$, then a simple modification of LinUCB algorithm with an enlarged exploration parameter performs well under corruptions.
In the non-corrupted setting, LinUCB (see Algorithm \ref{linucb} in Appendix for more details) calculates the ridge regression (with regularization parameter $\lambda$) estimator $\hat\theta_t^0$ using all the past observations $\{(X_s, Y_s)\}_{s=1}^{t-1}$. By \cite{abbasi2011improved}, for all $x\in \mathbb{R}^d$, it 
satisfies $|x^T (\hat\theta_t^0 -\theta) | \leq \beta_t \|x\|_{V_t^{-1}}$ with probability at least $1-\delta$, 
where
\begin{align}
    \beta_t & = \sigma\sqrt{d \log \left( \frac{1+t/\lambda}{\delta} \right)} + \sqrt{\lambda}  \label{beta} \\
     V_t & = \lambda I_d + \sum_{s=1}^{t-1} X_t X_t^T.
\end{align}
LinUCB policy then follows by pulling the arm with the biggest upper confidence bound with the exploration parameter $\beta_t$ as $$a_t = \argmax_a x_{t,a}^T \hat\theta_t^0 + \beta_t \|x_{t,a}\|_{{V_t}^{-1}} .$$

Under  adversarial attacks, we cannot observe the true context $X_t$ or true rewards $Y_t$. With the attacked context $\tilde X_t$ or attacked rewards $\tilde Y_t$, we can only calculate the corrupted ridge regression estimator from the observation history $\{(\tilde X_s, \tilde Y_s)\}_{s=1}^{t-1}$ and denote it as $\hat\theta_t$.
Define $\tilde V_t = \lambda I_d + \sum_{s=1}^{t-1} \tilde X_t \tilde X_t^T$. Take the attack of rewards as an example, and note that $\tilde X_t = X_t$, $\tilde Y_t = Y_t + c_t$ under attack of rewards, the difference $\hat\theta_t - \theta$ can be decomposed as in the following equation:
\begin{align}\label{1st_decomp}
    & \hat\theta_t - \theta = \tilde V_t^{-1} \sum_{s=1}^{t-1} \tilde X_s \tilde Y_s - \theta \nonumber \\
    & = \tilde V_t^{-1} \sum_{s=1}^{t-1} \tilde X_s (Y_s + c_s) - \theta \nonumber \\
    & = \hat\theta_t^0 - \theta + \tilde V_t^{-1} \sum_{s=1}^{t-1} c_s \tilde X_s.
\end{align}
Under attacked context, the analysis of $\hat\theta_t - \theta$ is more tricky since the player can only observe attacked context. However, 
by utilizing the concentration results in \cite{abbasi2011improved}, we can derive the following lemma that holds no matter the attack is on rewards or context.

\begin{lem} \label{concen_reward}
Define $\hat\theta_t$ as the solution to the ridge regression with regularization parameter $\lambda$ with observations $\{(\tilde X_t, \tilde Y_t)\}_{s=1}^{t-1}$. It satisfies the following inequality for all vector $x \in \mathbb{R}^d$ with probability at least $1-\delta$ no matter the attack is for rewards or context.
\begin{equation}\label{hp_events}
    | x^T (\hat\theta_t - \theta) | \leq (\beta_t + \gamma_t C) \|x\|_{\tilde V_t^{-1}},
\end{equation}
where $\beta_t = \sigma \sqrt{d\log \left( \frac{1+t/\lambda}{\delta}\right)} + \sqrt{\lambda}$ and $\gamma_t = \sqrt{\sum_{s=1}^{t-1} \|\tilde X_s\|^2_{\tilde V_s^{-1}} }$.
\end{lem}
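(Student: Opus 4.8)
The plan is to decompose $\hat\theta_t-\theta$ into the estimation error of a fictitious \emph{clean} ridge regression, which is controlled by the self-normalized confidence bound of \cite{abbasi2011improved}, plus a deterministic ``corruption bias'' term that is linear in the per-round attack costs and can therefore be bounded using the budget constraint $\sum_s|c_s|\le C$. The key observation is that both attack models reduce to the same shape: one can write $\hat\theta_t-\theta = E_t + \tilde V_t^{-1}\sum_{s=1}^{t-1} b_s \tilde X_s$, where $E_t$ is the error of ridge regression (with the same $\lambda$) of the surrogate responses $\tilde X_s^T\theta+\epsilon_s$ on the \emph{observed} regressors $\tilde X_s$, and $(b_s)_s$ are scalars with $\sum_{s=1}^{t-1}|b_s|\le C$. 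Given this common form, \cite{abbasi2011improved} bounds $E_t$ and the budget bounds the bias term, and \eqref{hp_events} then follows by the triangle inequality.

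First I would establish the reduction. Under attacked rewards, $\tilde X_s=X_s$ and $\tilde Y_s=X_s^T\theta+\epsilon_s+c_s$, so the decomposition \eqref{1st_decomp} is already of the required form with $E_t=\hat\theta_t^0-\theta$ and $b_s=c_s$, and $\sum_s|c_s|\le C$ by assumption. Under attacked context, $\tilde Y_s=X_s^T\theta+\epsilon_s$ while the regression is run on $\tilde X_s$; writing $X_s^T\theta=\tilde X_s^T\theta+(X_s-\tilde X_s)^T\theta$ inside $\hat\theta_t=\tilde V_t^{-1}\sum_{s=1}^{t-1}\tilde X_s\tilde Y_s$ and using $\tilde V_t^{-1}\sum_{s=1}^{t-1}\tilde X_s\tilde X_s^T=I_d-\lambda\tilde V_t^{-1}$ gives the same form with $b_s=(X_s-\tilde X_s)^T\theta$; since $\|X_s-\tilde X_s\|=\|\tilde x_{s,a_s}-x_{s,a_s}\|\le\sum_{a=1}^K\|\tilde x_{s,a}-x_{s,a}\|=c_s$ and $\|\theta\|\le1$, Cauchy--Schwarz gives $|b_s|\le c_s$ and hence $\sum_s|b_s|\le C$.

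Next I would bound the two pieces. The term $E_t$ is genuinely the error of a ridge regression whose noise sequence is the conditionally zero-mean $\sigma$-sub-Gaussian $\epsilon_s$, so \cite{abbasi2011improved} gives $\|E_t\|_{\tilde V_t}\le\beta_t$ on an event of probability at least $1-\delta$, and Cauchy--Schwarz in the $\tilde V_t^{-1}$-norm upgrades this to $|x^T E_t|\le\beta_t\|x\|_{\tilde V_t^{-1}}$ simultaneously for all $x$ on that event. For the bias term, Cauchy--Schwarz in the positive-definite inner product induced by $\tilde V_t^{-1}$ gives $|x^T\tilde V_t^{-1}\sum_s b_s\tilde X_s|\le\|x\|_{\tilde V_t^{-1}}\sum_s|b_s|\,\|\tilde X_s\|_{\tilde V_t^{-1}}$. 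Since $\tilde V_t\succeq\tilde V_s$ for $s\le t-1$, we have $\tilde V_t^{-1}\preceq\tilde V_s^{-1}$, so $\|\tilde X_s\|_{\tilde V_t^{-1}}\le\|\tilde X_s\|_{\tilde V_s^{-1}}\le\gamma_t$ directly from the definition of $\gamma_t$; combined with $\sum_s|b_s|\le C$ this bounds the bias term by $\gamma_t C\,\|x\|_{\tilde V_t^{-1}}$. Adding the two bounds yields $|x^T(\hat\theta_t-\theta)|\le(\beta_t+\gamma_t C)\|x\|_{\tilde V_t^{-1}}$, which is \eqref{hp_events}.

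The hard part, I expect, is the attacked-context reduction rather than any of the calculations: I have to make sure the surrogate regression --- responses $\tilde X_s^T\theta+\epsilon_s$, regressors $\tilde X_s$ --- actually satisfies the hypotheses behind \cite{abbasi2011improved}, which needs a filtration with respect to which the \emph{attacked} regressor $\tilde X_s$ is predictable (it must include round $s$'s attacked context and the pulled action) while the reward noise $\epsilon_s$ remains conditionally zero-mean $\sigma$-sub-Gaussian; and I must keep track throughout that it is the \emph{observed} Gram matrix $\tilde V_t$ --- not the unobservable one built from the true features $X_s$ --- that appears both in the concentration bound and in $\gamma_t$. The reward case is immediate from \eqref{1st_decomp}, and the bias-term estimate is routine.
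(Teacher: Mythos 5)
Your proof is correct and follows essentially the same route as the paper's: decompose $\hat\theta_t-\theta$ into a clean ridge-regression error bounded by $\beta_t$ via the self-normalized bound of \cite{abbasi2011improved} plus a corruption bias term $\tilde V_t^{-1}\sum_s b_s\tilde X_s$ with $\sum_s|b_s|\le C$, the latter bounded by $\gamma_t C$ using $\tilde V_t^{-1}\preceq\tilde V_s^{-1}$. Your unified presentation of the two attack models and your $\ell_1$--$\ell_\infty$ bound on the bias sum (the paper uses Cauchy--Schwarz on the sequences, to the same end) are only cosmetic departures from the paper's argument.
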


Based on Lemma \ref{concen_reward}, if the player knows an upper bound $C^\prime$ of the attack budget such that $C\leq C^\prime$, then we can simply enlarge the exploration parameter of LinUCB from $\beta_t$ to $\beta_t + \gamma_t C^\prime$, i.e., pull the arm
\begin{equation}\label{policy}
    a_t = \argmax_{a=1,\dots,K} \tilde x_{t,a}^T \hat\theta_t + (\beta_t + \gamma_t C^\prime) \|\tilde x_{t,a}\|_{\tilde V_t^{-1}}.
\end{equation}

By doing so, we can obtain a regret upper bound in Theorem \ref{reward} below.
\begin{thm} \label{reward}
Let $C^{\prime}$ be a constant that satisfies $C^\prime \geq C$, where $C$ is the total attack budget. Denote $R_r(T)$ and $R_c(T)$ as the cumulative regret under attack of rewards and context respectively. 
Under the attack of rewards or context, if we pull arm $a_t$ as defined in Equation \ref{policy} at round $t$, 
then with probability at least $1-\delta$ the cummulative regret satisfies
\begin{align}
    R_r(T) & \leq (2\beta_T +  \gamma_T (C+C^\prime)) \gamma_{T+1} \sqrt{T}  \label{rew_linucb} \\
    R_c(T) & \leq (2\beta_T +  \gamma_T (C+C^\prime)) \gamma_{T+1} \sqrt{T} + C  \label{context_linucb}
\end{align}
Under both cases, 
\begin{equation}
    R(T) \leq \tilde O(d\sqrt{T}) + \tilde O(d(C+C^\prime) \sqrt{T}).
\end{equation}
\end{thm}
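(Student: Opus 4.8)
The plan is to run the textbook optimism-in-the-face-of-uncertainty analysis of LinUCB, but with the ``corrupted'' confidence radius $\beta_t+\gamma_t C$ of Lemma~\ref{concen_reward} on the statistical side and the inflated radius $\beta_t+\gamma_t C'$ in the arm-selection rule~\eqref{policy}, and then to pay a separate additive price for the displacement of the feature vectors caused by the attack. Throughout, condition on the event of Lemma~\ref{concen_reward}, which holds with probability at least $1-\delta$ simultaneously over all $x$ and $t$.

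First I would treat the attack of rewards, where $\tilde X_t = X_t$ and $\tilde{\mathcal A}_t=\mathcal A_t$, so $x_{t,*}$ is itself one of the observed arms. The core is a four-step chain: $x_{t,*}^T\theta \le x_{t,*}^T\hat\theta_t + (\beta_t+\gamma_t C)\|x_{t,*}\|_{\tilde V_t^{-1}} \le x_{t,*}^T\hat\theta_t + (\beta_t+\gamma_t C')\|x_{t,*}\|_{\tilde V_t^{-1}}$ (using $C\le C'$) $\le X_t^T\hat\theta_t + (\beta_t+\gamma_t C')\|X_t\|_{\tilde V_t^{-1}}$ (by~\eqref{policy}, with $x_{t,*}$ a feasible competitor and $\tilde x_{t,a}=x_{t,a}$) $\le X_t^T\theta + (2\beta_t+\gamma_t(C+C'))\|X_t\|_{\tilde V_t^{-1}}$ (Lemma~\ref{concen_reward} once more, on $X_t$). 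Since $X_t=\tilde X_t$, the instantaneous regret $(x_{t,*}-X_t)^T\theta$ is at most $(2\beta_t+\gamma_t(C+C'))\|\tilde X_t\|_{\tilde V_t^{-1}}$. Summing over $t$, using that $\beta_t$ and $\gamma_t$ are nondecreasing so they can be replaced by $\beta_T,\gamma_T$, and then Cauchy--Schwarz together with the definition $\gamma_{T+1}=\sqrt{\sum_{s=1}^T\|\tilde X_s\|_{\tilde V_s^{-1}}^2}$ to get $\sum_{t=1}^T\|\tilde X_t\|_{\tilde V_t^{-1}}\le\sqrt T\,\gamma_{T+1}$, yields exactly~\eqref{rew_linucb}.

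For the attack of context the only structural change is that the true optimal feature $x_{t,*}$ is no longer observed (the player sees $\tilde x_{t,a_t^*}$ instead) and the pulled feature $\tilde X_t$ differs from the true $X_t$; here $\tilde Y_t=Y_t$, so Lemma~\ref{concen_reward} still applies verbatim with $\tilde V_t=\lambda I_d+\sum_{s<t}\tilde X_s\tilde X_s^T$. I would bracket the same four-step chain (now run on $\tilde x_{t,a_t^*}$ as the feasible competitor and $\tilde X_t$ as the pulled feature) with two triangle-inequality steps: $x_{t,*}^T\theta\le\tilde x_{t,a_t^*}^T\theta+\|x_{t,*}-\tilde x_{t,a_t^*}\|$ at the front and $\tilde X_t^T\theta\le X_t^T\theta+\|\tilde X_t-X_t\|$ at the back, both using $\|\theta\|\le1$. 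The instantaneous regret then picks up an extra $\|x_{t,*}-\tilde x_{t,a_t^*}\|+\|\tilde X_t-X_t\|$. On rounds with $a_t=a_t^*$ the true instantaneous regret is $0$ and can be discarded; on the remaining rounds $a_t^*\ne a_t$, so these two displacements are \emph{distinct} summands of $\sum_{a=1}^K\|\tilde x_{t,a}-x_{t,a}\|=c_t$ and therefore sum to at most $c_t$. Summing and using $\sum_t c_t\le C$ gives the additive $+C$, i.e.~\eqref{context_linucb}.

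Finally, to reach the $\tilde O$ form I would substitute $\beta_T=\sigma\sqrt{d\log((1+T/\lambda)/\delta)}+\sqrt\lambda=\tilde O(\sqrt d)$ and bound $\gamma_T,\gamma_{T+1}$ via the elliptical-potential (determinant--trace) lemma of \cite{abbasi2011improved}, which gives $\sum_{s=1}^T\|\tilde X_s\|_{\tilde V_s^{-1}}^2=O\!\big(d\log(1+T/(\lambda d))\big)$, hence $\gamma_{T+1},\gamma_T=\tilde O(\sqrt d)$; the $+C$ term is then dominated, leaving $R(T)\le\tilde O(d\sqrt T)+\tilde O(d(C+C')\sqrt T)$. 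I expect the only genuinely delicate point to be the context case: one must charge the two per-round displacement terms against \emph{disjoint} portions of the per-round attack cost $c_t$ (otherwise one only gets $+2C$), which is precisely where discarding the zero-regret rounds $a_t=a_t^*$ is needed. Everything else is the standard LinUCB regret decomposition with the confidence radius enlarged by $\gamma_t C'$.
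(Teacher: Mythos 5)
Your proposal is correct and follows essentially the same route as the paper: the standard optimistic chain with the radius $\beta_t+\gamma_t C$ from Lemma~\ref{concen_reward} on the estimation side and $\beta_t+\gamma_t C'$ in the selection rule, yielding the per-round bound $(2\beta_t+\gamma_t(C+C'))\|\tilde X_t\|_{\tilde V_t^{-1}}$, followed by Cauchy--Schwarz and the elliptical-potential bound on $\gamma_{T+1}$, with an additive $+C$ from the feature displacement under attacked context. Your explicit observation that the two displacement terms must be charged to disjoint summands of $c_t$ (by discarding the zero-regret rounds $a_t=a_t^*$) is in fact slightly more careful than the paper, which asserts $\|x_{t,*}-\tilde x_{t,*}\|+\|X_t-\tilde X_t\|\le |c_t|$ without addressing that case.
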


\section{PROPOSED ALGORITHM - RobustBandit}\label{algo_robust}
\subsection{Stochastic contextual bandit under unknown attacks}
In the previous section, we have shown that when the total attack budget has a known and tight upper bound, then a simple LinUCB algorithm with an enlarged exploration parameter can improve the robustness under adversarial attacks. However, if the attack budget $C$ is agnostic, then we have to adaptively enlarge the exploration parameter to make it robust to all possible amounts of attacks. 

\begin{algorithm}[htb] 
\caption{RobustBandit Algorithm}
\label{bob}
\textbf{Input}: $T$, $\beta_t$ as in Equation \ref{beta}, epoch length $H$.
\begin{algorithmic}[1]
\STATE Initialize candidate attack budget set $J= \{0\}\cup\{2^j\}_{j=0}^{\lceil \log_2 2KT\rceil}$. 
\STATE Initialize exponential weights $w_j(1) = 1$ for $j=1,\dots,|J|$. 
\STATE Initialize the exploration parameter for EXP3 as $\alpha = \min\left\{ 1, \sqrt{\frac{|J| \log |J|}{(e-1) \lceil \frac{T}{H} \rceil}} \right\}$.
\FOR{$i = 1$ {\bfseries to} $\lceil \frac{T}{H} \rceil$}
    \STATE Initialize $\hat\theta_t = 0$, $\tilde V_t = \lambda I_d$.
    \STATE Update probability distribution for pulling candidates in $J$ as
        \begin{equation*}
            p_{j}(i) = \frac{\alpha}{|J|} + (1-\alpha) \frac{w_j(t)}{\sum_{i=1}^{|J|} w_i(t)}, \forall j=1,\dots, |J|.
        \end{equation*}
        \STATE Draw $j_i \gets j \in [|J|]$ with probability $p_j(i)$. 
    \FOR{$t = (i-1)H+1$ {\bfseries to} $\min\{T,iH\}$}
        \STATE Observe (attacked) contextual feature vectors $\tilde x_{t,a}$.
        \STATE Pull arm according to the following equation
        \begin{align*}
            a_t & = \argmax_{a=1,\dots,K} \tilde x_{t,a}^T \hat\theta_t + (\beta_t + \gamma_t J_{j_i}) \|\tilde x_{t,a}\|_{\tilde V_t^{-1}} 
        \end{align*}
        \STATE Observe reward $\tilde Y_t$. 
    
        \STATE Update LinUCB components, i.e., $\tilde V_{t+1} = \tilde V_t + \tilde X_t \tilde X_t^T$ and $\hat\theta_{t+1} = \tilde V_{t+1}^{-1} \sum_{s=1}^t \tilde X_s \tilde Y_s$.
        \ENDFOR
        \STATE Update EXP3 components: $\hat y_t(j) \gets 0$ for all $j\neq j_i$, $\hat y_t(j) = \sum_{t=(i-1)H+1}^{iH} \tilde Y_t / p_{j} (i)$ if $j=j_i$ and 
        \begin{align*}
            w_j(t+1) & = w_j(t) \times \text{exp} \left( \frac{\alpha}{|J|} \hat y_t(j)\right).
        \end{align*}
    
\ENDFOR
\end{algorithmic}
\end{algorithm}

From Theorem \ref{reward}, we can see that the key to design a robust 
algorithm is to find an appropriate upper bound $C^\prime$ on the total attack budget $C$. Since the regret upper bound is $\tilde O(d(C+C^\prime+1)\sqrt{T})$ in Theorem \ref{reward}, a good choice of $C^\prime$ should satisfy $C\leq C^\prime \leq mC$, where $m\geq 1$ is a constant that should be as small as possible. 

The intuition behind our method is that we accumulate experience about what the best choice of $C^\prime$ is. In the meantime, we cut the whole time horizon into epochs and restart the algorithm when we need to try a different choice of $C^\prime$. The effectiveness of this restarting technique not only helps adaptively learn the best $C^\prime$, but also voids the attacker's efforts in the previous epoch. While the attack budget of the attacker is being exhausted after restarting the epochs, our algorithm will be able to learn the best arms gradually through the epochs.

Assume we restart the algorithm every $H$ rounds, then there will be $L = \lceil \frac{T}{H}\rceil$ epochs and the last epoch does not necessarily have $H$ rounds. Since both $Y_t$ and $\|x_{t, a}\|$ are upper bounded by $1$ and the attacker should make sure the corruption is undetectable, which means that $\tilde Y_t \in [0,1]$ and $\|\tilde x_{t,a}\| \leq 1$, 
a natural upper bound on the attack budget spent by the attacker 
is $C\leq T$ under attack of rewards and $C\leq 2KT$ under attack of context. Assume $C_i$ is the attack spent in the $i$-th epoch by the attacker, then $C_i \leq \min(C,H)$ under attack of rewards and $C_i \leq \min(C, 2KH)$ under attack of context for all epochs $i$.

If we denote $J = \{0\}\cup\{2^j\}_{j=0}^{\lceil \log_2 2KT\rceil}$ as a set of possible choices of $C^\prime$. Then since $C_i \leq 2KH$, 
there must exist a $C^* \in J$ such that $C_i \leq C^* \leq 2C$ for all $i$, 
which makes $C^*$ a proper choice for $C^\prime$ at the every epoch. Due to the existence of adversarial corruptions, it is natural to treat the choices of $C^\prime$ as another adversarial multi-armed bandit (MAB) problem \cite{auer2002nonstochastic}. This motivates our proposed algorithm. This idea is also very similar to the Bandit-Over-Bandit idea \cite{cheung2019learning}, where the authors use an EXP3 algorithm over the LinUCB algorithm in non-stationary bandit problems. 

To be more specific, we propose a two-layer robust bandit algorithm. 
In the top layer, the algorithm uses an adversarial MAB policy, namely EXP3 \citep{auer2002nonstochastic}, to pull candidate  $J_{j_i}$ from the set $J$ in the beginning of every epoch. Here $j_i$ is the index of the pulled candidate at the $i$-th epoch and $J_j$ is the $j$-th element in the set $J$. In the $i$-th epoch, where $t = (i-1)H + 1, \dots, iH$, 
in the bottom layer, 
the algorithm enlarges the exploration parameter of LinUCB based on $C^\prime = J_{j_i}$ and pulls arm $a_t$ at round $t$ according to
\begin{equation}\label{final_policy}
    a_t = \argmax_{a=1,\dots,K} \tilde x_{t,a}^T \hat\theta_t + (\beta_t + \gamma_t J_{j_i}) \|\tilde x_{t,a}\|_{\tilde V_t^{-1}}.
\end{equation}
after observing the (attacked) context. 
After receiving the (attacked) reward of the pulled arm, this reward is fed to the bottom layers to update the components of LinUCB. When an epoch ends, we restart the LinUCB algorithm and use the rewards accumulated during the previous epoch to update the EXP3 algorithms. The EXP3 algorithm will then update a decision about which $C^\prime$ to choose in the next epoch. 
Details can be found in Algorithm \ref{bob}.  

We emphasize here that our algorithm does not need to know whether the attack is on rewards or context in order to make decisions if there are finite arms. Our algorithm also works for infinite-arm problem under attacks of rewards by simply setting $J = \{0\}\cup\{2^j\}_{j=0}^{\lceil \log_2 T\rceil}$ since $C\leq T$ under attack of rewards.

\subsection{Regret analysis}\label{analysis_robust}
In this section, we formally analyze the regret of our proposed Algorithm \ref{bob}. 
Proofs are similar to \cite{cheung2019learning}. 
For a round $t$ in the $i$-th epoch, let $X_t(J_j)$ denote the true feature vector of the arm pulled at round $t$ if the chosen $C^\prime$ in the beginning of the $i$-th epoch is $J_j$ and the enlarged exploration parameter is based on $J_j$ for all rounds in this epoch. Denote $\tilde X_t(J_j)$ as the corresponding (attacked) feature vector. 
Let $C^*$ be the element in $J$ such that $C_i \leq C^* \leq 2C$.
We decompose the cumulative regret into two quantities below: 
\begin{align}
    & \mathbb{E}[R(T)] = \mathbb{E}\left [ \sum_{t=1}^T \left( {x_{t,*}}^T \theta - X_t^T \theta\right) \right ] \nonumber\\
    & = \underbrace{ \mathbb{E} \left [ \sum_{t=1}^T \left( {x_{t,*}}^T \theta - X_t (C^*)^T \theta \right) \right ]}_{\textstyle \text{Quantity (A)} }  \nonumber \\
    & + \underbrace{\mathbb{E} \left [ \sum_{t=1}^T \left( X_t (C^*)^T \theta - X_t (J_{j_i})^T \theta \right) \right ]}_{\textstyle \text{Quantity (B)} }. 
\end{align}
We bound the regret from Quantity (A) in the following lemma.
\begin{lem}\label{qa}
Quantity (A) is upper bounded by $\tilde O(\frac{dT}{\sqrt{H}} + dC\sqrt{H} + \frac{dCT}{\sqrt{H}} )$.
\end{lem}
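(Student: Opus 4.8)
\textbf{Proof proposal for Lemma \ref{qa}.}

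The plan is to bound Quantity (A), which is the regret of running the restarted LinUCB-with-enlarged-parameter policy using the fixed (correct) choice $C^\prime = C^*$ across all $L = \lceil T/H \rceil$ epochs, and then reduce this to a per-epoch application of Theorem \ref{reward}. First I would split the sum over $t\in[T]$ into the $L$ epochs, so that Quantity (A) $= \sum_{i=1}^{L} \mathbb{E}\bigl[ \sum_{t \in \text{epoch } i} (x_{t,*} - X_t(C^*))^T\theta \bigr]$. Within epoch $i$ the algorithm behaves exactly like the known-budget algorithm of the previous section, but started fresh (with $\tilde V_t = \lambda I_d$ at the start of the epoch) and run for at most $H$ rounds, against an attacker who spends at most $C_i \le \min(C,H)$ (rewards) or $C_i \le \min(C, 2KH)$ (context) budget in that epoch. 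Since $C^* \ge C_i$ by the choice of $C^*$, the hypothesis $C^\prime \ge C_{\text{(in-epoch)}}$ of Theorem \ref{reward} is met at the epoch level.

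Next I would apply the bound of Theorem \ref{reward} to each epoch with horizon $H$, in-epoch budget $C_i$, and $C^\prime = C^*$: the per-epoch regret is at most $(2\beta_H + \gamma_H(C_i + C^*))\gamma_{H+1}\sqrt{H} + C_i$, where here $\beta_\cdot, \gamma_\cdot$ are the quantities from Lemma \ref{concen_reward} computed within the epoch. The key quantitative facts to invoke are the standard elliptical-potential estimates: $\gamma_t = \sqrt{\sum_{s=1}^{t-1}\|\tilde X_s\|_{\tilde V_s^{-1}}^2} = \tilde O(\sqrt{d})$ uniformly (using $\|\tilde X_s\|\le 1$ and $\lambda \ge 1$, via $\sum_s \|\tilde X_s\|_{\tilde V_s^{-1}}^2 \le 2d\log(1+H/\lambda)$), and $\beta_H = \tilde O(\sqrt d)$. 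Substituting, each epoch contributes $\tilde O\bigl(d\sqrt H + d(C_i + C^*)\sqrt H\bigr) + C_i$. Summing over the $L$ epochs and using $\sum_{i=1}^L C_i \le C$ (total budget) together with $C^* \le 2C$ and $L = \lceil T/H\rceil$, the $d\sqrt H$ term sums to $\tilde O(dL\sqrt H) = \tilde O(dT/\sqrt H)$, the $dC_i\sqrt H$ terms sum to $\tilde O(dC\sqrt H)$, the $dC^*\sqrt H$ terms sum to $\tilde O(dL \cdot C\sqrt H) = \tilde O(dCT/\sqrt H)$, and the $\sum C_i$ term is $O(C) = \tilde O(dCT/\sqrt H)$ (absorbed). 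This yields exactly the claimed $\tilde O\bigl(\frac{dT}{\sqrt H} + dC\sqrt H + \frac{dCT}{\sqrt H}\bigr)$.

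The main obstacle I anticipate is the bookkeeping around the restart and the definition of $X_t(C^*)$: Quantity (A) is defined in terms of a \emph{counterfactual} trajectory in which every epoch uses $C^\prime = C^*$, so I must argue that this counterfactual trajectory really is a legitimate instance of the known-budget algorithm on each epoch and that Theorem \ref{reward}'s high-probability guarantee can be applied epoch-by-epoch (taking a union bound over the $L$ epochs, which only costs logarithmic factors absorbed into $\tilde O$). I also need to be careful that the attacker is adaptive and omniscient, so the in-epoch budgets $C_i$ are random; but since the bound from Theorem \ref{reward} is monotone in $C_i$ and $C_i \le C^*$ always holds, I can replace $C_i$ by its bound wherever convenient and use $\sum_i C_i \le C$ only at the end. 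A minor additional point is that the context-attack case carries the extra additive $+C_i$ per epoch and uses the larger candidate set, but these are handled by the same summation and do not change the final rate.
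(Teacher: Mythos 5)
Your proposal is correct and follows essentially the same route as the paper: decompose Quantity (A) into the $\lceil T/H\rceil$ epochs, apply Theorem \ref{reward} within each epoch with horizon $H$, in-epoch budget $C_i$, and $C^\prime = C^*$, then sum using $\sum_i C_i \le C$, $C^* \le 2C$, and the elliptical-potential bound $\beta,\gamma = \tilde O(\sqrt{d})$ to obtain the three terms $\tilde O(dT/\sqrt{H})$, $\tilde O(dC\sqrt{H})$, and $\tilde O(dCT/\sqrt{H})$. Your additional remarks on the union bound over epochs and the counterfactual trajectory $X_t(C^*)$ are sound points that the paper's own proof treats implicitly.
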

\begin{proof}
Quantity (A) can be further decomposed into epochs
\begin{align}
    \sum_{i=1}^{\lceil \frac{T}{H} \rceil} \mathbb{E} \left [ \sum_{t=(i-1)H+1}^{\min(T,iH)} \left( {x_{t,*}}^T \theta - X_t (C^*)^T \theta \right) \right ] \nonumber
\end{align}
Under attack of rewards,  
according to Theorem \ref{reward} and the choice of $C^*$, the regret of each epoch from Quantity (A) is upper bounded by $\tilde O(d\sqrt{H}) + \tilde O(d(C_i+C^*)\sqrt{H})$, where $C_i$ is the real attack budget used by the attacker in the $i$-th epoch. According to Equation \ref{rew_linucb}, \ref{context_linucb}, and summing over $\lceil \frac{T}{H}\rceil$ epochs, we get  
\begin{align}
    \text{Quantity (A)}  & \leq 
      \sum_{i=1}^{\lceil \frac{T}{H} \rceil} (2\beta_T +  \gamma_T (C_i+C^*)) \gamma_{T+1} \sqrt{H}  \nonumber\\
      & = \frac{2\beta_T\gamma_{T+1} T}{\sqrt{H}} + \gamma_T \gamma_{T+1} (C\sqrt{H} + \frac{2CT}{\sqrt{H}}) \nonumber
\end{align}
Similarly, under attack of context, we have 
\begin{align}
    & \text{Quantity (A)}  \nonumber \\
    & \leq 
 \frac{2\beta_T\gamma_{T+1} T}{\sqrt{H}} + \gamma_T \gamma_{T+1} (C\sqrt{H} + \frac{2CT}{\sqrt{H}}) + C \nonumber
\end{align}
We bound $\gamma_{T+1}$ in Equation \ref{bound_gamma} in Appendix, and according to Equation \ref{bound_gamma}, we have 
\begin{align}
\text{Quantity (A)} = \tilde O(\frac{dT}{\sqrt{H}} + dC\sqrt{H} + \frac{dCT}{\sqrt{H}} ). \nonumber
\end{align}
\end{proof}

The regret in Quantity (B) comes from the learning of $C^*$ using EXP3 algorithm. From the result in \cite{auer2002nonstochastic}, we have the following lemma and proof is deferred to Appendix.
\begin{lem}
\label{thm_q2}
For a random sequence of candidates $\{J_{j_1},\dots, J_{j_L}\}$ pulled by the EXP3 layer in each epoch in Algorithm \ref{bob}, where $L=\lceil T/H\rceil$, no matter the attack is for rewards or context, we have
\begin{align*}
    & \mathbb{E}\left [ \sum_{t=1}^T X_t (C^*)^T \theta\right] - E\left [ \sum_{t=1}^T X_t (J_{j_t})^T \theta \right ]  \\
    & \leq 2\sqrt{e-1} \sqrt{\lceil TH \rceil |J| \log |J|} = \tilde O(\sqrt{TH}).
\end{align*}
\end{lem}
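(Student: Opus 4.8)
The plan is to read the top (meta) layer of Algorithm~\ref{bob} as a textbook instance of EXP3 over the $|J|$ candidates $J_1,\dots,J_{|J|}$, run for $L=\lceil T/H\rceil$ rounds, one per epoch, whose gain in meta-round $i$ under meta-arm $j$ is the normalized cumulative observed reward of that epoch, $g_i(j):=\frac1H\sum_{t\in\mathcal{R}_i}\tilde Y_t(J_j)$, where $\mathcal{R}_i$ is the set of rounds in epoch $i$. Since $\tilde Y_t\in[0,1]$ in both attack models, $g_i(j)\in[0,1]$. The crucial structural point is that the bottom-layer LinUCB is \emph{restarted} at the start of each epoch, so $g_i(j)$ is a well-defined quantity depending only on the candidate chosen for epoch $i$ (together with the adversary's transcript-dependent strategy), which is exactly the setting EXP3 is built for. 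Then I would invoke the standard EXP3 regret guarantee (\cite{auer2002nonstochastic}, Corollary~3.2), with the comparator meta-arm fixed to the index of $C^*$ (which lies in $J$ by construction, with $C_i\le C^*\le 2C$), and multiply back by $H$ to undo the normalization; this yields $\mathbb{E}[\sum_i \sum_{t\in\mathcal{R}_i}\tilde Y_t(C^*)] - \mathbb{E}[\sum_i\sum_{t\in\mathcal{R}_i}\tilde Y_t(J_{j_i})]\le 2\sqrt{e-1}\,H\sqrt{\lceil T/H\rceil\,|J|\log|J|}$, following the same line as the Bandit-over-Bandit analysis of \cite{cheung2019learning}.

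The second step is to pass from observed rewards to true expected rewards. Under attack of context $\tilde Y_t=X_t^T\theta+\epsilon_t$ with $\mathbb{E}[\epsilon_t\mid\mathcal{F}_{t-1}]=0$, so the noise terms telescope out of the expectation (whether along the realized run or along the counterfactual $C^*$-run), and taking expectations replaces $\sum_t\tilde Y_t(\cdot)$ by $\mathbb{E}[\sum_t X_t(\cdot)^T\theta]$ exactly. Under attack of rewards $\tilde Y_t=X_t^T\theta+\epsilon_t+c_t$; the per-run budget constraint forces $\sum_t|c_t|\le C$ in every realization (including the $C^*$-counterfactual run), so the substitution costs at most an additive $2C$, a lower-order term that is absorbed. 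Either way one obtains $\mathbb{E}[\sum_t X_t(C^*)^T\theta]-\mathbb{E}[\sum_t X_t(J_{j_i})^T\theta]\le 2\sqrt{e-1}\,H\sqrt{\lceil T/H\rceil\,|J|\log|J|}+O(C)$.

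It then remains to simplify: using $H^2\lceil T/H\rceil\le H(T+H)\le 2TH$ for $H\le T$, and $|J|=O(\log(KT))$, the leading term is $\tilde O(\sqrt{TH})$, which is the claimed bound.

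The main obstacle I anticipate is justifying the EXP3 guarantee against a \emph{fully adaptive} adversary: the gain sequences $(g_i(j))_i$ of the unpulled candidates are not an oblivious fixed table but random variables that react to the player's past choices, so the usual oblivious-adversary statement of EXP3 does not literally apply. The resolution is that the EXP3 weight/potential argument is carried out conditionally, round by round, and only uses that the chosen arm's gain is observed and that all gains lie in $[0,1]$; hence the bound survives for adaptive adversaries, exactly as in \cite{cheung2019learning}. A secondary point to handle carefully is that the epoch restart is precisely what makes each $g_i(j)$ a function of the single choice $J_j$ for that epoch; without restarting, the meta-arms would not have well-defined payoffs and the reduction to EXP3 would break. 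The remaining pieces — the martingale cancellation of the noise and the $O(C)$ budget bookkeeping — are routine.
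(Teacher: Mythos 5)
Your proposal follows essentially the same route as the paper: both treat the top layer as a textbook EXP3 instance and invoke Corollary~3.2 of \cite{auer2002nonstochastic} with $Q\le H$, $L=\lceil T/H\rceil$ and $K'=|J|$ to get the $2\sqrt{e-1}\,H\sqrt{\lceil T/H\rceil\,|J|\log|J|}=\tilde O(\sqrt{TH})$ bound. You are in fact somewhat more careful than the paper's own two-line proof, which silently identifies the EXP3 gains (sums of observed, possibly corrupted rewards) with the mean-reward quantities $X_t(\cdot)^T\theta$ appearing in the lemma statement; your martingale cancellation of the noise, the additive $O(C)$ bookkeeping under reward attacks (lower order and harmless), and the remark on why the EXP3 potential argument survives an adaptive adversary all address gaps the paper leaves implicit.
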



The regret upper bound of our proposed algorithm follows directly from Lemma \ref{qa} and Lemma \ref{thm_q2}. We show the conclusion in Theorem \ref{final_thm} below. 

\begin{thm}
\label{final_thm}
Denote $\nu= \sqrt{2d \log \frac{d\lambda + T}{d\lambda}}$. 
For any error probability $\delta \in (0,1)$,
 Algorithm \ref{bob}  with $\beta_t = \sigma \sqrt{d\log \left( \frac{1+t/\lambda}{\delta}\right)} + \sqrt{\lambda}$,  $\alpha = \min\left\{ 1, \sqrt{\frac{|J| \log |J|}{(e-1)T}} \right\}$ and $\gamma_t = \sqrt{\sum_{s=1}^{t-1} \|\tilde X_s\|^2_{\tilde V_s^{-1}} }$ 
has the following expected total regret:
\begin{align*}
    \mathbb{E} [R(T)] 
    & \leq \tilde O(\frac{dT}{\sqrt{H}} + dC\sqrt{H} + \frac{dCT}{\sqrt{H}} ) + \tilde O(\sqrt{TH})
\end{align*}
\end{thm}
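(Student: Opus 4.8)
The plan is to read Theorem \ref{final_thm} as an assembly of the two lemmas already in hand together with the regret decomposition $\mathbb{E}[R(T)] = \text{Quantity (A)} + \text{Quantity (B)}$ stated above. Quantity (A) is the regret the bottom-layer LinUCB policy would incur had it been run in every epoch with the single good candidate $C^*\in J$ (the one with $C_i \le C^* \le 2C$ for all $i$); Quantity (B) is the extra cost of the top-layer EXP3 learner committing to $J_{j_i}$ rather than to $C^*$. By Lemma \ref{qa}, $\text{Quantity (A)} \le \tilde O(dT/\sqrt{H} + dC\sqrt{H} + dCT/\sqrt{H})$, and by Lemma \ref{thm_q2}, $\text{Quantity (B)} = \mathbb{E}[\sum_{t=1}^T X_t(C^*)^T\theta] - \mathbb{E}[\sum_{t=1}^T X_t(J_{j_i})^T\theta] \le 2\sqrt{e-1}\sqrt{\lceil TH\rceil\,|J|\log|J|} = \tilde O(\sqrt{TH})$. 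Adding the two bounds yields exactly the claimed expression, so the bulk of the theorem is a corollary; the work left is to make the high-probability estimate and the in-expectation estimate compatible and to track the $d$-dependence.

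First I would address the probabilistic bookkeeping. Lemma \ref{qa} is derived from Theorem \ref{reward}, which holds on an event $\mathcal{E}$ with $\Pr(\mathcal{E}) \ge 1-\delta$; Lemma \ref{thm_q2} is an unconditional expectation statement about the EXP3 layer. On $\mathcal{E}^c$ the per-round regret is at most $1$ because $\|x_{t,a}\|\le 1$, $\|\theta\|\le 1$ and $Y_t\in[0,1]$, so the contribution of $\mathcal{E}^c$ to $\mathbb{E}[R(T)]$ is at most $\delta T$; choosing $\delta$ polynomially small in $T$ (say $\delta = 1/T$) reduces this to $\tilde O(1)$, which is absorbed into $\tilde O$ since that notation already hides the $\log(1/\delta) = \Theta(\log T)$ sitting inside $\beta_T$. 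Second, I would invoke the elliptical-potential / matrix-determinant bound (Equation \ref{bound_gamma} in the appendix), $\gamma_{T+1} \le \nu\sqrt{T}$ with $\nu = \sqrt{2d\log\frac{d\lambda+T}{d\lambda}}$, to turn the products $\beta_T\gamma_{T+1}$ and $\gamma_T\gamma_{T+1}$ appearing in Quantity (A) into the $d$-dependence recorded in Lemma \ref{qa}; together with the choice of $\alpha$ in the theorem matching what Lemma \ref{thm_q2} requires, the two lemmas then plug in verbatim.

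The one genuinely delicate point is the range mismatch in the EXP3 feedback, $\hat y_t(j) = \sum_{t=(i-1)H+1}^{iH}\tilde Y_t / p_j(i)$, whose unnormalized magnitude is of order $H$ rather than $1$; this forces a rescaling of the textbook EXP3 guarantee over the $L = \lceil T/H\rceil$ epochs and is precisely what produces the $\sqrt{HL} = \sqrt{TH}$ factor in Quantity (B). Since that rescaling is carried out inside Lemma \ref{thm_q2}, it is not re-done here, and I expect the hardest part of the overall argument to live there rather than in Theorem \ref{final_thm} itself. Finally I would remark that the theorem is stated for a generic epoch length $H$; balancing the four terms $dT/\sqrt{H}$, $dC\sqrt{H}$, $dCT/\sqrt{H}$, $\sqrt{TH}$ (which calls for $H$ on the order of $\sqrt{T}$) recovers the $\tilde O(d^{1/2}(C+1)T^{3/4}) + \tilde O(d^{3/2}CT^{1/4})$ rate advertised in the introduction, but that optimization is outside the scope of the displayed statement.
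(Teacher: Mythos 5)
Your proposal is correct and follows exactly the paper's route: the paper proves Theorem~\ref{final_thm} precisely by adding the bound on Quantity (A) from Lemma~\ref{qa} to the bound on Quantity (B) from Lemma~\ref{thm_q2} under the stated decomposition, with the $d$-dependence coming from Equation~\ref{bound_gamma}. You are in fact slightly more careful than the paper, which leaves implicit the conversion of the high-probability bound in Lemma~\ref{qa} into an expectation bound (your $\delta T$ argument with $\delta$ polynomially small in $T$); that step is a genuine, if minor, gap in the paper's exposition that your write-up correctly closes.
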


\begin{remark}
Since the choice of $H$ should not depend on $C$ and we want to minimize the cumulative regret with respect to $H$, we can choose $H = \tilde O(d\sqrt{T})$. The cumulative regret of our algorithm then satisfies
\begin{equation*}
    \mathbb{E} [R(T)] \leq \tilde O(d^{\frac{1}{2}} (C+1) T^{\frac{3}{4}})  + \tilde O( d^{\frac{3}{2}} C T^{\frac{1}{4}} ).
\end{equation*}
\end{remark}

\begin{remark}
In the non-corrupted setting or when the attack budget $C \leq \tilde O(1)$, the regret upper bound of our algorithm is still sub-linear. In contrast, classic algorithms such as LinUCB and LinTS fail completely under such attack budget and suffers from linear regret \cite{garcelon2020adversarial}.
\end{remark}

\begin{remark}
Our proposed algorithm is the first one in existing works that can deal with adversarial attacks under linear contextual bandit environment with sub-linear regret. Our algorithm can improve the robustness under either the attacks on rewards or context. To the best of our knowledge, there is no existing work that improves the robustness under attack of context. 
Note that our proposed algorithm does not need to know anything about the attack budget, whether the attack is for rewards or for context in order to achieve this regret upper bound for finite-arm problems. 
\end{remark}

\begin{figure*}[h]
    \centering
    
    \includegraphics[width=0.33\textwidth]{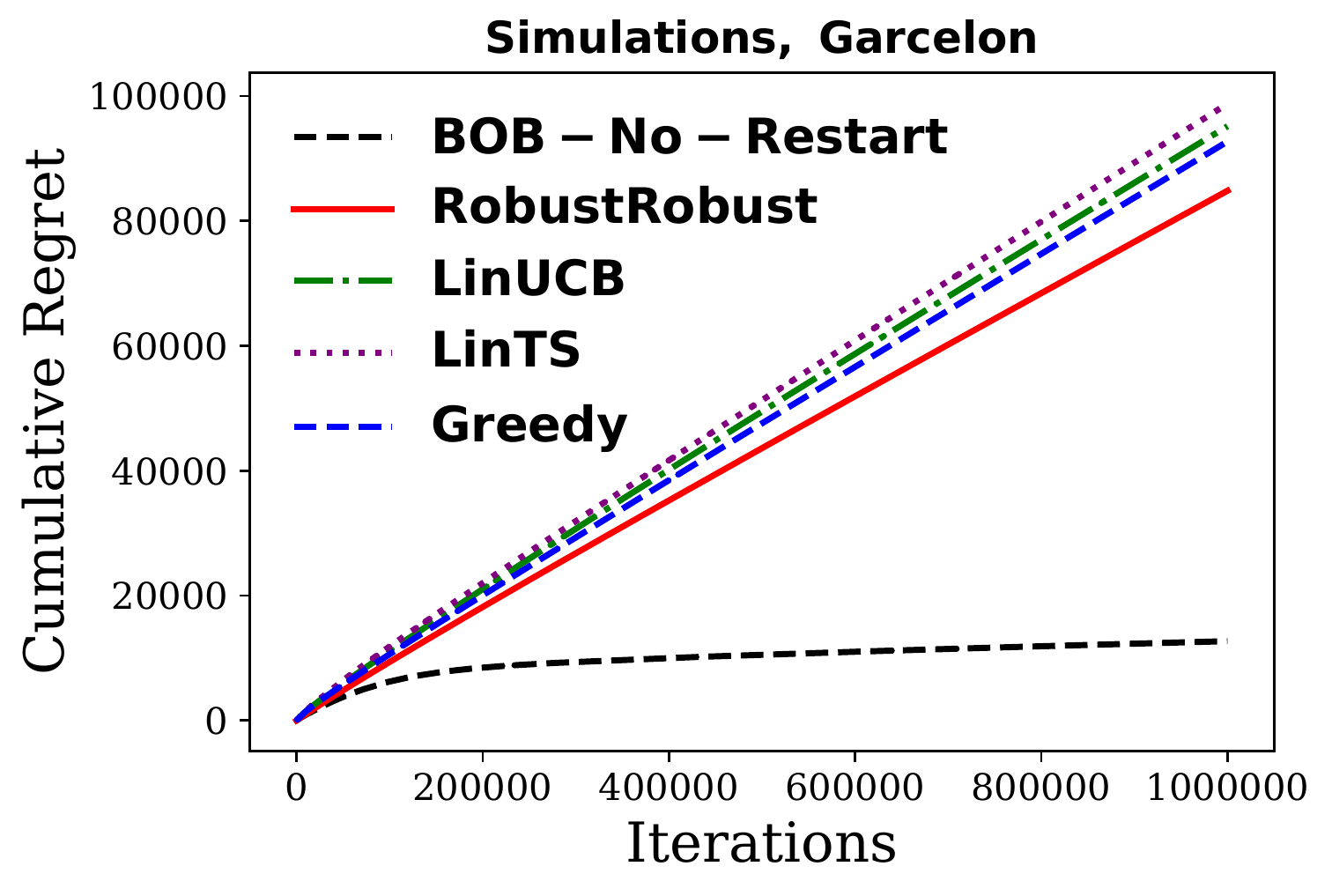}\includegraphics[width=0.33\textwidth]{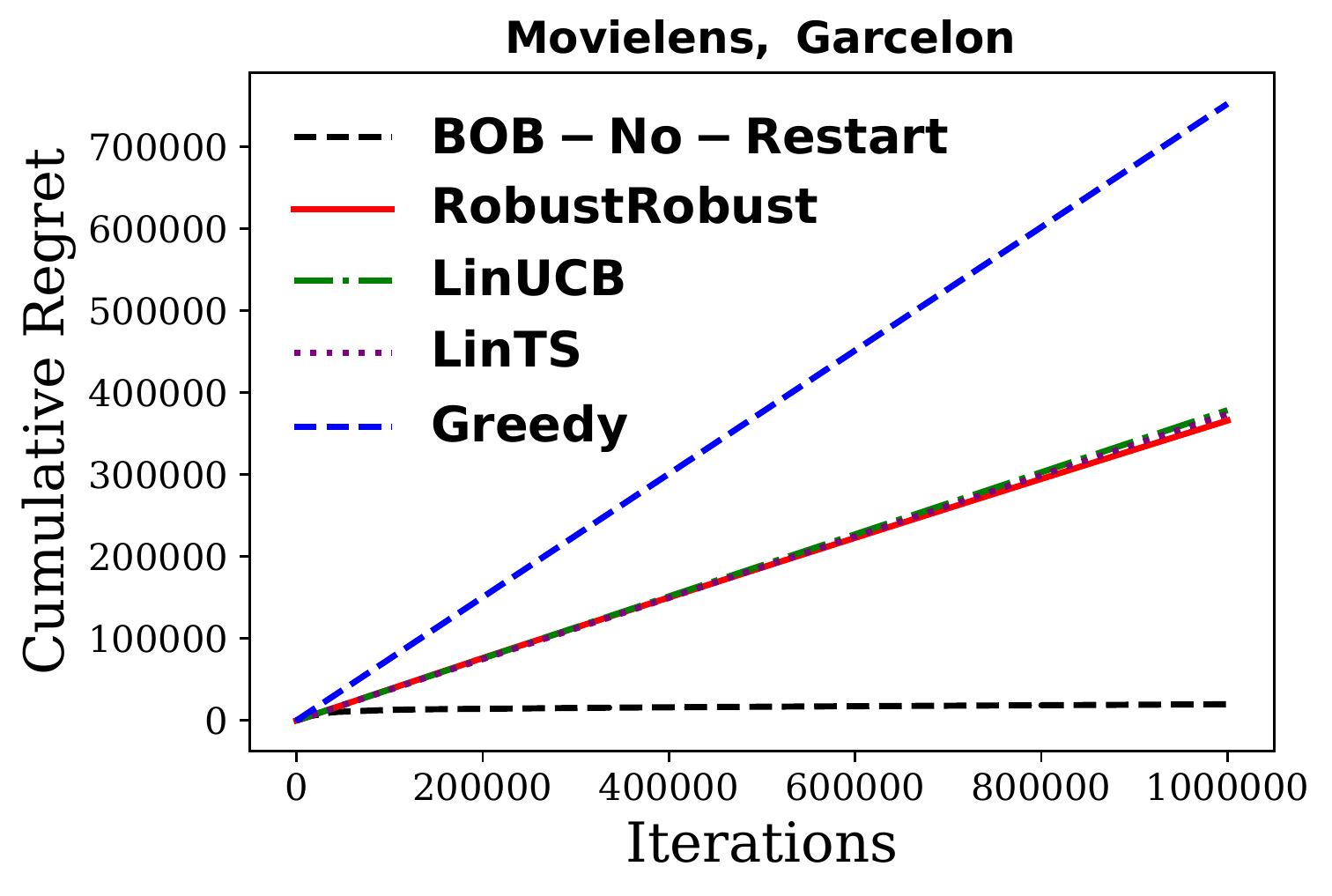}\includegraphics[width=0.33\textwidth]{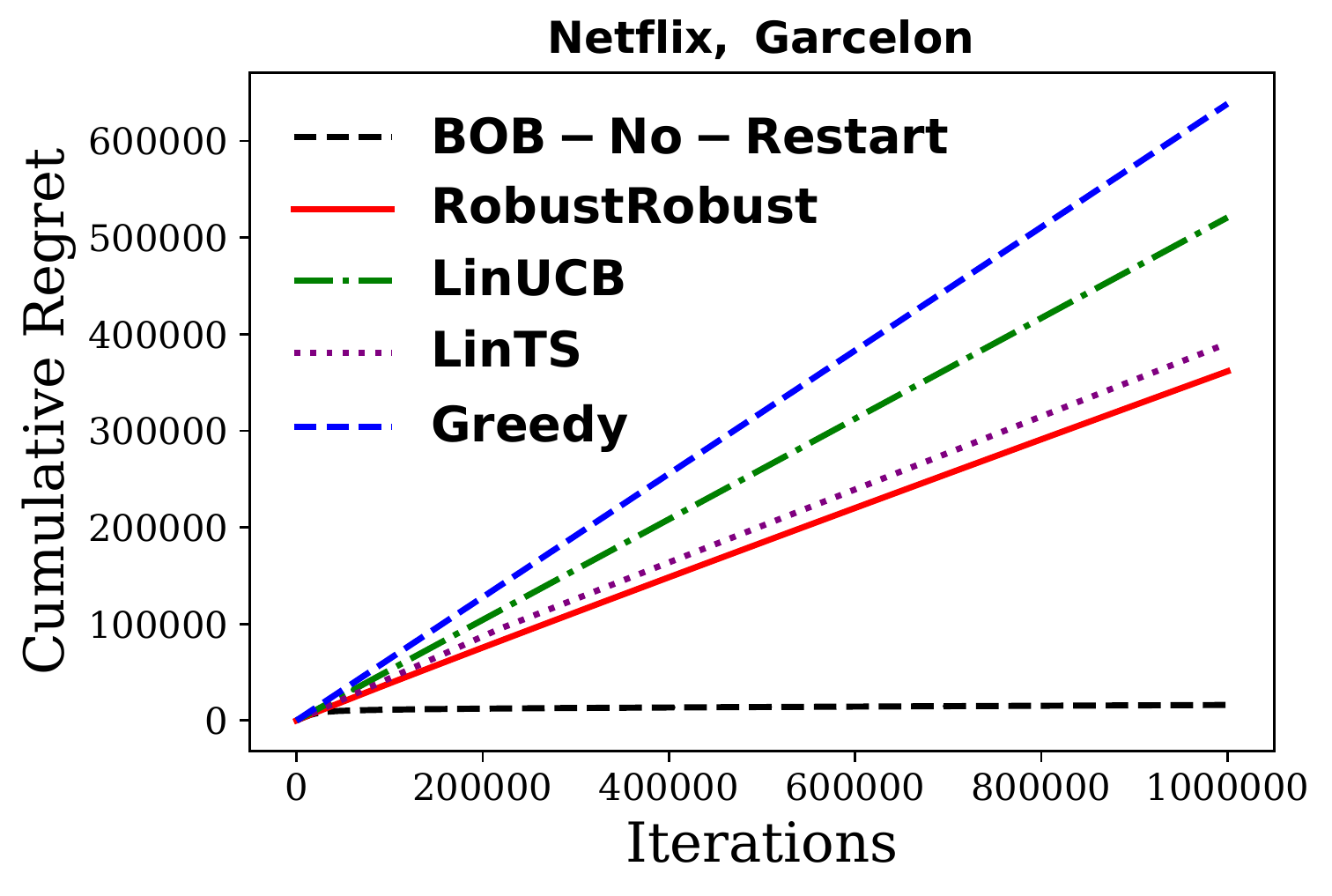}
    
    \includegraphics[width=0.33\textwidth]{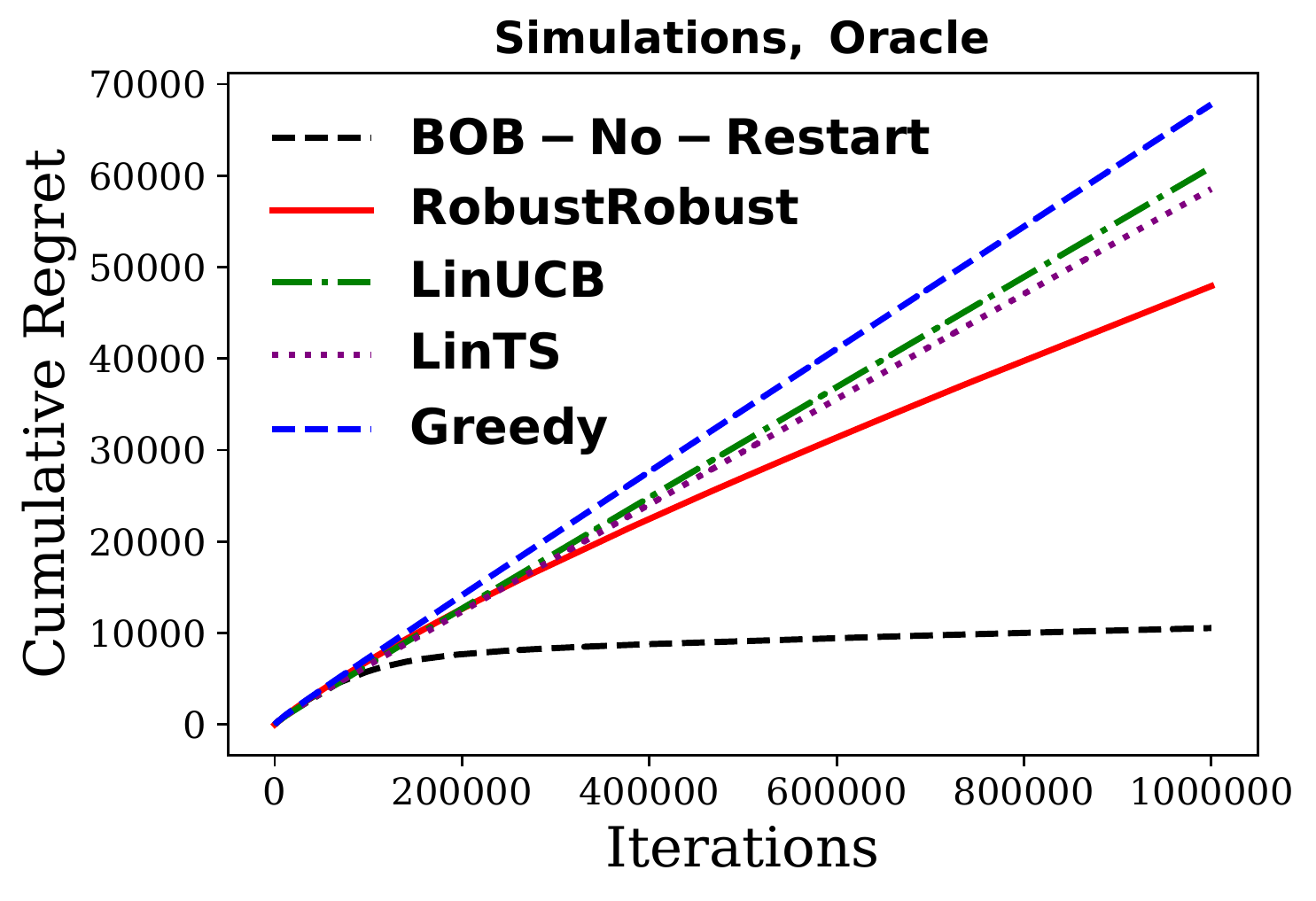}\includegraphics[width=0.33\textwidth]{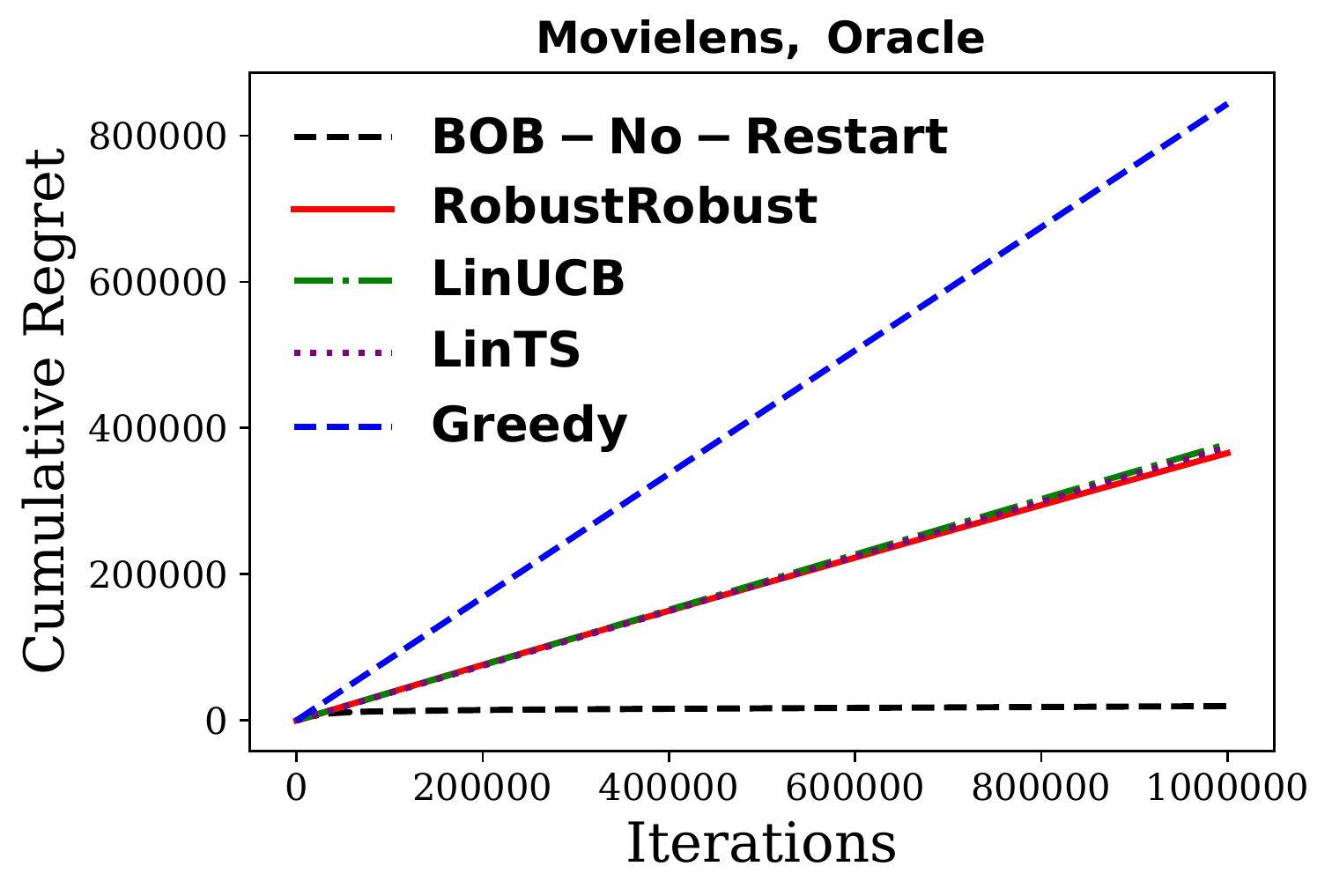}\includegraphics[width=0.33\textwidth]{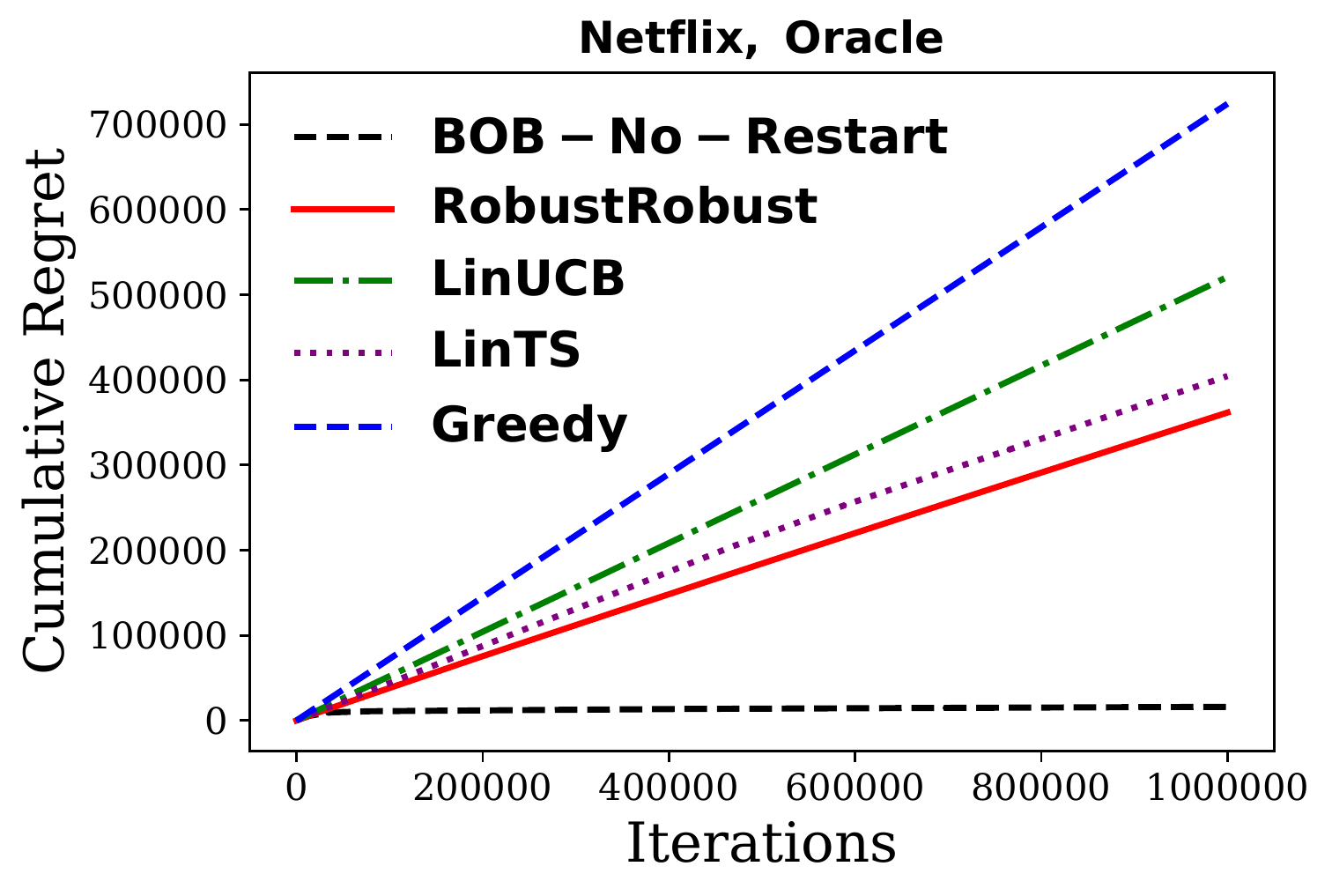}

    \caption{Comparison of LinUCB, LinTS, Greedy, RobustBandit and BOB-No-Restart algorithms under adversarial attacks on rewards and context.}
    \label{plot}
\end{figure*}

\section{EXPERIMENTAL RESULTS}\label{exp_robust}
In this section, we show by experiments on both synthetic and real datasets that our proposed algorithm is robust to adversarial corruptions. To the best of our knowledge, there is no existing robust algorithm that works for stochastic contextual bandit with changing feature vectors in the general environment. 
We compare our proposed algorithm with LinUCB, LinTS and Greedy algorithm. 

For all the experiments below, we set the number of total rounds $T=10^6$. At each round, there are $K=20$ arms and the dimension of features is $d=10$. We set the total attack budget as $C=100$. We choose the hyper-parameters exactly as Theorem \ref{final_thm} suggests. Suggested by the explicit formulae of regret bounds combined from Theorem \ref{reward} and Lemma \ref{thm_q2}, we set epoch length $H = \frac{\beta_T \gamma_{T+1}}{\sqrt{e-1}} \sqrt{T}$. 
Similar to \cite{garcelon2020adversarial}, we set error probability $\delta=0.01$ and regularization parameter $\lambda = 0.1$. 
If arm $a$ is pulled at round $t$, the true sample reward is drawn from $\mathcal{N} (x_{t,a}^T\theta, 0.01)$. An appropriate attack (See Section \ref{type_attack} for details of attacks) is assigned to the reward or the context and only the corrupted observation is shown to the player. Results reported below are averaged over $10$ independently repeated experiments. 
We run the experiments on three datasets listed below:

\textbf{1). Simulation}: We simulate all the feature vectors $x_{t,a}$ and the model parameter $\theta$ from $\text{Uniform} (-\frac{1}{\sqrt{d}}, \frac{1}{\sqrt{d}})$ to make sure $|x_{t,a}^T \theta| \leq 1$. Mean rewards $\mu_{t,a} = x_{t,a}^T \theta$ are then transformed by $\mu_{t,a} \gets \frac{x_{t,a}^T \theta +1}{2}$ to make sure the final mean rewards $\mu_{t,a}\in [0,1]$.

\textbf{2). Movielens 100K dataset}: This dataset has 100K ratings of 943 users on 1,682 movies. We use LIBPMF \citep{hfy12a,yu2014parallel} to perform matrix factorization with $d=10$ on the ratings data and get the features matrices for both users and movies. The movies are then treated as arms and for each repeated experiment, we randomly select $K=20$ movies. The model parameter $\theta$ is defined as the average of randomly selected $100$ users' feature vectors. The mean reward is then defined as $\mu_{t,a} = x_{t,a}^T \theta$ and shifted to $[0,1]$.

\textbf{3). Netflix dataset}: Netflix dataset contains ratings on 17,770 items from 480,189 users. The preprocessing of this data is almost the same as the Movielens dataset, except now that we use only the items having more than 10,000 ratings and this leaves us with $2042$ items. 

\subsection{Type of Attacks}\label{type_attack}
Details of the attacks considered in the experiments are presented below. For all attacks, the attacker cannot attack anymore if the attack budget $C$ is exhausted. 

\textbf{1). Garcelon}: This attack is based on the reward attack in \cite{garcelon2020adversarial}.
Whenever the pulled arm is not among the attacker's target arms, the attacker modifies the reward into a random noise $\mathcal{N} (0, 0.01)$.

\textbf{2). Oracle}: This attack is a simple modification from \cite{jun2018adversarial}. It pushes the reward of an arm to some margin $\epsilon_0$ below the reward of the worst arm and does not attack if such condition is already attained. We use $\epsilon_0=0.01$ in the experiments. 

For Garcelon and Oracle attacks in the above, the attacker attacks the context or rewards only when the pulled arm is among the top $N = 0.5K$ arms. This means that the attacker is malicious in the sense that it will attack when the pulled arm is among the best half arms.

\subsection{BOB-No-Restart: An empirically good algorithm based on the proposed method}

Although the restarting technique gives theoretically sub-linear regret bounds and it is necessary for the analysis of EXP3 layer to work, we found that restarting LinUCB usually abandons useful information in the past epochs. Empirically, a two-layer bandit structure without restarting performs exceptionally well (See Figure \ref{plot}) and we call this algorithm Bandit-Over-Bandit-No-Restart (BOB-No-Restart). To be more specific, BOB-No-Restart still has two layers and the top layer still guides the selection of $C^\prime$ to help enlarge the exploration parameter. However, instead of choosing $C^\prime$ every epoch, the EXP3 layer now chooses $C^\prime$ every round using the immediate observed reward from last round. The chosen $C^\prime$ now serves directly to the bottom LinUCB layer every round. We present more details of BOB-No-Restart in Algorithm \ref{bob-no-restart} in Appendix.

\subsection{Results}
We show the plots of averaged cumulative regret against number of rounds in Figure \ref{plot}. From left to right in Figure \ref{plot}, the plots are for simulations, Movielens and Netflix respectively. From the 1st row to the 2nd row, the attack types are Garcelon and Oracle respectively. From the plots, we can see that our proposed RobustBandit algorithm consistently outperforms LinUCB, LinTS and Greedy algorithms under adversarial attacks and improves the robustness. Moreover, BOB-No-Restart algorithm constantly improves robustness significantly compared with all algorithms. 

\section{CONCLUSION and FUTURE WORK}
In this work, we study the robustness of stochastic linear contextual bandit problems. We propose a first robust bandit algorithm for the contextual setting under adversarial attacks which can achieve sub-linear regrets. Our algorithm is also the first one that can deal with attacked context. Our algorithm does not need to know the attack budget or format in order to make decisions under finite-arm problems, while the attacker can be omniscient and fully adaptive after observing the player's decisions. 
The proposed algorithm has regret upper bound $\tilde O(d^{\frac{1}{2}} (C+1) T^{\frac{3}{4}})  + \tilde O( d^{\frac{3}{2}} C T^{\frac{1}{4}} )$. Extensive experiments show that our algorithm improves the robustness of bandit problems under attacks.

\paragraph{Future Work:} 
First of all, as proved by \cite{garcelon2020adversarial}, if an algorithm has $f(T)$ regret under non-corrupted cases, the algorithm must suffer from linear regret when $C = \Omega (f(T))$. This shows that the regret lower bound does not solely relies on $C$ or $T$. Instead, the regret lower bound relies on interactions of both $C$ and $T$. 
We think a possible future direction is to derive the optimal balance between $C$ and $T$ in regret lower bounds for linear contextual bandit algorithm under attacks. Secondly, an optimal algorithm that matches this lower bound is of substantial interest. Lastly, the BOB-No-Restart algorithm has exceptionally good performance in experiments although it does not have theoretical guarantees. It is also interesting to investigate the theoretical behavior of this algorithm.

\bibliographystyle{plainnat}
\bibliography{main}

\onecolumn
\section{SUPPLEMENTARY MATERIAL}
\subsection{Proof of Lemma \ref{concen_reward}}
\begin{proof}
We prove the result holds for attack of rewards and context respectively below. 

\textbf{Attack rewards}: Note that in this case, $\tilde X_t = X_t$ and $\tilde V_t = V_t$, so we have from Equation \ref{1st_decomp} that
\begin{align}
\|\hat\theta_t - \theta\|_{V_t} \leq \|\hat\theta_t^0 - \theta\|_{V_t} + \|V_t^{-1} \sum_{s=1}^{t-1} c_s X_s \|_{V_t}
\end{align}
From the result in \cite{abbasi2011improved}, $\|\hat\theta_t^0 - \theta\|_{V_t} \leq \beta_t$ with probability at least $1-\delta$. For the second term in the above inequality,
\begin{align*}
    \|V_t^{-1} \sum_{s=1}^{t-1} c_s X_s \|_{V_t} \leq \sum_{s=1}^{t-1}  |c_s|\sqrt{X_s^T V_t^{-1} X_s} 
    \leq \sqrt{\sum_{s=1}^{t-1} |c_s|^2} \sqrt{\sum_{s=1}^{t-1} X_s^T V_s^{-1} X_s} 
    \leq \gamma_t C.
\end{align*}

\textbf{Attack context}:
Denote $c_s^* = \tilde X_s - X_s$. Then because of the bounded attack budget, $\sum_{t=1}^T \|c_s^*\| \leq C$.
For the ridge regression estimator under corrupted context, note that $\tilde Y_t = Y_t$ in this case, we have
\begin{align*}
    & \hat\theta_t - \theta = \tilde V_t^{-1} \sum_{s=1}^{t-1} \tilde X_s Y_s - \theta \\
    & = \tilde V_t^{-1} \sum_{s=1}^{t-1} (X_s+c_s^*) (X_s^T\theta + \epsilon_s) - \tilde V_t^{-1} (\lambda I_d + \sum_{s=1}^{t-1} \tilde X_s \tilde X_s^T ) \theta \\
    & = - \tilde V_t^{-1} \sum_{s=1}^{t-1} (X_s+c_s^*) {c_s^*}^T \theta + \tilde V_t^{-1} \sum_{s=1}^{t-1} \tilde X_s \epsilon_s  - \lambda \tilde V_t^{-1}  \theta \\
\end{align*}
For the first term in the above,
\begin{align*}
   &  \|\tilde V_t^{-1} \sum_{s=1}^{t-1} (X_s+c_s^*) {c_s^*}^T \theta\|_{\tilde V_t}
   = \| \sum_{s=1}^{t-1} \tilde X_s {c_s^*}^T \theta\|_{\tilde V_t^{-1}} \\
   & \leq \sum_{s=1}^{t-1} \sqrt{({c_s^*}^T \theta)^2 \tilde X_s^T \tilde V_t^{-1} \tilde X_s} 
   \leq \sum_{s=1}^{t-1} \|c_s^*\| \|\theta\| \sqrt{ \tilde X_s^T \tilde V_t^{-1} \tilde X_s}  \\
   & \leq \sqrt{\sum_{s=1}^{t-1} \|c_s^*\|^2} \sqrt{\sum_{s=1}^{t-1} \tilde X_s^T \tilde V_t^{-1} \tilde X_s} 
   \leq C \gamma_t.
\end{align*}
For the second term, $\| \tilde V_t^{-1} \sum_{s=1}^{t-1} \tilde X_s \epsilon_s \|_{\tilde V_t} = \| \sum_{s=1}^{t-1} \tilde X_s \epsilon_s \|_{\tilde V_t^{-1}}$. Using Theorem 1 in \cite{abbasi2011improved}, we have with probability at least $1-\delta$,
\begin{equation*}
    \| \tilde V_t^{-1} \sum_{s=1}^{t-1} \tilde X_s \epsilon_s \|_{\tilde V_t} 
    \leq \sqrt{2\sigma^2 \log \left( \frac{\det(\tilde V_t)^{1/2} \det(\lambda I_d)^{-1/2} }{\delta} \right)}.
\end{equation*}
Since $\text{tr}(\tilde V_t) = \lambda d + \sum_{s=1}^{t-1} \|\tilde X_s\|^2 \leq d\lambda + t$ and
$\det(\tilde V_t) \leq (\frac{1}{d}\text{tr}(\tilde V_t))^{d}$, we have $\| \tilde V_t^{-1} \sum_{s=1}^{t-1} \tilde X_s \epsilon_s \|_{\tilde V_t} \leq \sigma \sqrt{d\log \left( \frac{1+t/\lambda}{\delta}\right)}$.

For the third term, $\|\lambda \tilde V_t^{-1}  \theta\|_{\tilde V_t} \leq \sqrt{\lambda}$. So $\|\hat\theta_t - \theta\|_{\tilde V_t} \leq \beta_t + \gamma_t C$.

In conclusion, no matter it is under the attack of rewards or context, $\|\hat\theta_t - \theta\|_{\tilde V_t} \leq \beta_t + \gamma_t C$. 
The lemma follows by noticing that for all $x\in \mathbb{R}^d$,
\begin{equation*}
    | x^T (\hat\theta_t - \theta) | \leq  \|\hat\theta_t - \theta\|_{\tilde V_t} \|x\|_{\tilde V_t^{-1}}.
\end{equation*}
\end{proof}

\subsection{Proof of Theorem \ref{reward}}
\begin{proof}
Denote $\tilde r_t = (\tilde x_{t,*}-\tilde X_t)^T \theta$, we first bound $\tilde r_t$ and then relate this to the true single-round regret $r_t = (x_{t,*} - X_t)^T \theta$. 
According to Lemma \ref{concen_reward},
\begin{align*}
    & \tilde r_t = (\tilde x_{t,*} - \tilde X_t)^T \theta 
    \leq (\tilde x_{t,*} - \tilde X_t)^T \hat\theta_t + (\beta_t + \gamma_t C) \left(\|\tilde x_{t,*}\|_{\tilde V_t^{-1}} + \|\tilde X_t\|_{\tilde V_t^{-1}} \right) \\
    & \leq  (\beta_t + \gamma_t C^\prime)  \left( \|\tilde X_t\|_{\tilde V_t^{-1}} - \|\tilde x_{t,*}\|_{\tilde V_t^{-1}} \right) + (\beta_t + \gamma_t C)  \left(\|\tilde x_{t,*}\|_{\tilde V_t^{-1}} + \|\tilde X_t\|_{\tilde V_t^{-1}} \right)  \\
    & \leq  [2\beta_t + \gamma_t (C^\prime+C)] \|\tilde X_t\|_{\tilde V_t^{-1}}  + \gamma_t (C - C^\prime ) \|\tilde x_{t,*}\|_{\tilde V_t^{-1}}   \\
    & \leq (2\beta_t + \gamma_t (C+C^\prime)) \|\tilde X_t\|_{\tilde V_t^{-1}} \quad \text{ since } C\leq C^\prime
\end{align*}
So 
\begin{align*}
    \tilde R(T) & := \sum_{t=1}^T \tilde r_t 
    \leq (2\beta_T +  \gamma_T (C+C^\prime))\sum_{t=1}^T \|\tilde X_t\|_{\tilde V_t^{-1}} \\
    & \leq (2\beta_T +  \gamma_T (C+C^\prime)) \gamma_{T+1} \sqrt{T}.
\end{align*}

Under attack of rewards, $\tilde r_t = r_t$ since $\tilde x_{t,a} = x_{t,a}$, so $R(T) = \tilde R(T)$. While under attack of context, 
\begin{align*}
    r_t - \tilde r_t & = [(x_{t,*}-\tilde x_{t,*} - (X_t-\tilde X_t)]^T \theta \\
    & \leq (\|x_{t,*}-\tilde x_{t,*}\| + \|X_t-\tilde X_t\|) \|\theta\| \leq |c_t|
\end{align*}
So $R(T) = \sum_{t=1}^T r_t \leq \tilde R(T) + C$ under attack of context. In conclusion, with probability at least $1-\delta$, 
\begin{align}
    R(T) & \leq (2\beta_T +  \gamma_T (C+C^\prime)) \gamma_{T+1} \sqrt{T} \quad \quad \quad \text{ (attack rewards) }  \\
    R(T) & \leq (2\beta_T +  \gamma_T (C+C^\prime)) \gamma_{T+1} \sqrt{T} + C \quad \text{ (attack context)}
\end{align}
From Lemma 11 of \cite{abbasi2011improved}, we have that when $\lambda \geq \max(\max \|\tilde X_t\|^2, 1)$, 
\begin{equation}\label{bound_gamma}
    \gamma_{t}^2 \leq 2\log \frac{\det (\tilde V_t)}{\det (\lambda I_d)} \leq 2d \log \frac{\frac{1}{d} \text{tr}(\tilde V_t)}{\lambda} \leq 2d \log (1+\frac{t-1}{d \lambda}).
\end{equation}
Therefore, $\beta_T, \gamma_T \leq O(\sqrt{d\log T})$, which ends the proof.
\end{proof}

\subsection{Proof of Lemma \ref{thm_q2}}
\begin{proof}
From Corollary 3.2 in \cite{auer2002nonstochastic}, the regret of EXP3 algorithm is upper bounded by $2\sqrt{e-1} Q \sqrt{ L K^\prime \log K^\prime}$, where Q is the maximum absolute sum of rewards in any epoch, $L$ is the number of rounds and $K^\prime$ is the number of arms. In our case, $Q\leq H$, $L = \lceil \frac{T}{H} \rceil$ and $K^\prime = |J|$. So 
we have 
\begin{align*}
  \text{Quantity (B)} \leq 2\sqrt{e-1} H \sqrt{\lceil\frac{T}{H}\rceil |J| \log |J|}
  \leq \tilde O(\sqrt{TH}).
\end{align*}
\end{proof}

\subsection{LinUCB algorithm for solving non-corrupted linear contextual bandit problem}
\begin{algorithm}[H] 
\caption{LinUCB Algorithm}
\label{linucb}
\textbf{Input}: $T$, $\lambda$, sub-Gaussian parameter $\sigma$, error probability $\delta$. 
\begin{algorithmic}[1]
\STATE Initialize $V_t = \lambda I_d$, $\hat\theta_t = 0$ and $b_t = 0$.
        \FOR{$t = 1$ {\bfseries to} $T$}
        \STATE Observe context $x_{t,a}$.
        \STATE Calculate exploration parameter $\beta_t = \sigma\sqrt{d \log \left( \frac{1+t/\lambda}{\delta} \right)} + \sqrt{\lambda}$.
        \STATE Pull arm $a_t = \argmax_a x_{t,a}^T \hat\theta_t + \beta_t \|x_{t,a}\|_{{V_t}^{-1}}$.
        \STATE Observe reward $Y_t$.
        \STATE Update LinUCB components
            $V_{t+1} = V_t + X_t X_t^T$, $b_{t+1} = b_t + X_t Y_t$ and $\hat\theta_{t+1} = V_{t+1}^{-1} b_{t+1}$
    \ENDFOR
\end{algorithmic}
\end{algorithm}

\subsection{BOB-No-Restart Algorithm}

\begin{algorithm}[htb] 
\caption{BOB-No-Restart Algorithm}
\label{bob-no-restart}
\textbf{Input}: $T$, $\beta_t$ as in Equation \ref{beta}.
\begin{algorithmic}[1]
    \STATE Initialize $\hat\theta_t = 0$, $\tilde V_t = \lambda I_d$ and candidate attack budget set $J= \{0\}\cup\{2^j\}_{j=0}^{\lceil \log_2 2KT\rceil}$. 
    \STATE Initialize exponential weights $w_j(1) = 1$ for $j=1,\dots,|J|$. 
    \STATE Initialize the exploration parameter for EXP3 as $\alpha = \min\left\{ 1, \sqrt{\frac{|J| \log |J|}{(e-1)T}} \right\}$.
    \FOR{$t = 1$ {\bfseries to} $T$}
        \STATE Observe (attacked) contextual feature vectors $\tilde x_{t,a}$.
        \STATE Update probability distribution for pulling candidates in $J$ as
        \begin{equation*}
            p_{j}(t) = \frac{\alpha}{|J|} + (1-\alpha) \frac{w_j(t)}{\sum_{i=1}^{|J|} w_i(t)}, \forall j=1,\dots, |J|.
        \end{equation*}
        \STATE Draw $j_t \gets j \in [|J|]$ with probability $p_j(t)$. 
        \STATE Pull arm according to the following equation
        \begin{align*}
            a_t & = \argmax_{a=1,\dots,K} \tilde x_{t,a}^T \hat\theta_t + (\beta_t + \gamma_t J_{j_t}) \|\tilde x_{t,a}\|_{\tilde V_t^{-1}} 
        \end{align*}
        \STATE Observe reward $\tilde Y_t$. 
        \STATE Update LinUCB components, i.e., $\tilde V_{t+1} = \tilde V_t + \tilde X_t \tilde X_t^T$ and $\hat\theta_{t+1} = \tilde V_{t+1}^{-1} \sum_{s=1}^t \tilde X_s \tilde Y_s$.
        \STATE Update EXP3 components: $\hat y_t(j) \gets 0$ for all $j\neq j_t$, $\hat y_t(j) = \tilde Y_t / p_{j} (t)$ if $j=j_t$ and 
        \begin{align*}
            w_j(t+1) & = w_j(t) \times \text{exp} \left( \frac{\alpha}{|J|} \hat y_t(j)\right).
        \end{align*}
    \ENDFOR
\end{algorithmic}
\end{algorithm}
\end{document}